\documentclass[11pt]{article}
\usepackage{url}
\usepackage[margin=1in]{geometry}
\usepackage{amsmath,amsfonts,amsthm,amssymb,amsopn,bm,color,mathtools,booktabs,multirow,appendix,caption}
\usepackage{dsfont}
\usepackage{algorithm}
\usepackage{algorithmic}
\usepackage{hyperref}
\usepackage{nicefrac}
\usepackage{enumerate}
\usepackage{enumitem}
\usepackage{comment}
\usepackage{microtype}
\usepackage{float}
\usepackage{natbib}
\usepackage{lipsum,lmodern}
\usepackage{todonotes}
\usepackage{fancyvrb}
\usepackage{xspace}
\usepackage{subcaption}
\usepackage{wrapfig}
\usepackage[most]{tcolorbox}
\allowdisplaybreaks

\usepackage{mathtools}  
\usepackage{microtype}  
\allowdisplaybreaks 

\usepackage{thmtools}
\usepackage{thm-restate}

\definecolor{ForestGreen}{rgb}{0.1333,0.5451,0.1333}
\hypersetup{
    colorlinks=true,
    citecolor=ForestGreen,
    linkcolor=blue,
    filecolor=magenta,      
    urlcolor=blue,
}

\def\E{\mathbb{E}}
\def\P{\mathbb{P}}
\def\R{\mathbb{R}}
\def\X{\mathcal{X}}
\def\V{\mathcal{V}}
\def\I{\mathcal{I}}
\def\Y{\mathcal{Y}}
\def\N{\mathbb{N}}
\def\1{\bm{1}}

\newcommand{\mc}[1]{\mathcal{#1}}
\newcommand{\htheta}{\widehat{\theta}}
\newcommand{\otheta}{\overline{\theta}}

\newcommand{\adj}{\mathrm{Adjacent}}

\newcommand{\Sp}[1]{\left(#1\right)}

\newcommand{\Bp}[1]{\left\{#1\right\}}

\newcommand{\Norm}[1]{\left\|#1\right\|}

\DeclareMathOperator*{\argmin}{argmin}
\DeclareMathOperator*{\argmax}{argmax}

\DeclareMathOperator*{\conv}{conv}

\newcommand{\algoname}{\textsf{Adjacent-BAI}\xspace}

\newtheorem{lemma}{Lemma}

\newtheorem{theorem}{Theorem}
\newtheorem{proposition}{Proposition}

\theoremstyle{definition}

\newtheorem{definition}{Definition}

\theoremstyle{remark}

\title{On The Complexity of Best-Arm Identification in Non-Stationary Linear Bandits}
\author{
  Leo Maynard-Zhang\thanks{Equal contribution.} \\ University of Washington \\ \texttt{leomayn@cs.washington.edu}
  \and
  Zhihan Xiong\footnotemark[1] \\ University of Washington \\ \texttt{zhihanx@cs.washington.edu}
  \and
  Kevin Jamieson \\ University of Washington \\ \texttt{jamieson@cs.washington.edu}
  \and
  Maryam Fazel \\ University of Washington \\ \texttt{mfazel@uw.edu}
}
\date{}

\begin{document}
\maketitle

\begin{abstract}We study the fixed-budget best-arm identification (BAI) problem in non-stationary linear bandits.  Concretely, given a fixed time budget $T\in \mathbb{N}$, finite arm set $\mathcal{X} \subset \mathbb{R}^d$, and a potentially adversarial sequence of unknown parameters $\lbrace \theta_t\rbrace_{t=1}^{T}$ (hence non-stationary), a learner aims to identify the arm with the largest cumulative reward $x_* = \arg\max_{x \in \mathcal{X}} x^\top\sum_{t=1}^T \theta_t$ with high probability. In this setting, it is well-known that uniformly sampling arms from the G-optimal design yields a minimax-optimal error probability of $\exp\left(-\Theta\left(T /  H_{G}\right)\right)$, where $H_{G}$ scales proportionally with the dimension $d$. However, this notion of complexity is overly pessimistic, as it is derived from a lower bound in which the arm set consists only of the standard basis vectors, thus masking any potential advantages arising from arm sets with richer geometric structure. To address this, we establish an \textit{arm-set-dependent} lower bound that, in contrast, holds for any arm set. Motivated by the ideas underlying our lower bound, we propose the \textit{Adjacent-optimal design}, a specialization of the well-known $\mathcal{XY}$-optimal design, and develop the \textsf{Adjacent-BAI} algorithm. We prove that the error probability of \textsf{Adjacent-BAI} matches our lower bound up to constants, verifying the tightness of our lower bound, and establishing the arm-set-dependent complexity of this setting.

\end{abstract}

\section{Introduction} \label{sec:intro}
 We focus on quantifying the difficulty of fixed-budget best-arm identification (BAI) problem in non-stationary linear bandits. The linear bandit framework generalizes the classical multi-armed bandit problem by associating each arm with a feature vector and modeling rewards as linear in an unknown parameter. Algorithms such as UCB and Thompson Sampling are known to perform optimally in the well-studied regret-minimization setting \citep{lattimore2020bandit}. However, in the best-arm identification setting, where exploration is more important than cumulative reward, these algorithms are suboptimal \citep{bubeck2009pure}, thus requiring specialized attention. In the standard BAI setting, a stationary environment is assumed where the rewards of arms are sampled i.i.d \citep{audibert2010best, soare2014best, NEURIPS2019_8ba6c657}. However, algorithms designed for these environments can completely fail as soon as this assumption is lifted, for example, in settings where the value of an arm can change at any time step \citep{abbasi2018best,xiong2024b}. An interesting question is \emph{exactly how the difficulty of this setting changes once this stationary assumption is lifted}. 

In recent work, \citet{xiong2024b} partially answered this question by showing that when the arm set is restricted to the standard basis, the difficulty of non-stationary BAI scales proportionally with the dimension. While informative, this result is ultimately unsatisfying as it collapses the linear bandit model back into a multi-armed bandit. Basis arm sets erase correlations between arms, precisely the geometric structure that linear bandits are meant to exploit. With this in mind, we aim to uncover precisely which relationship between arms truly governs the difficulty of non-stationary BAI. Inspired by \citet{soare2014best}, a natural conjecture is that the pairwise relationships between every arm should be considered when measuring difficulty. However, such a characterization implies that any arm set containing a basis is no easier than the basis itself, and thus no progress is made. To refine this view, we introduce adjacency, a geometric notion that captures which pairs of arms can compete for optimality. Accordingly, we find that the pairwise relationships between adjacent arms are necessary and sufficient to measure the difficulty of non-stationary BAI. Formally, by leveraging the adjacent structure of the arm set, we refine the previous notion of difficulty in non-stationary BAI by establishing a measure of complexity that adapts to the geometry of \textit{any} arm set, which we term as \textit{arm-set-dependent}.

\paragraph{Our contributions.}
\begin{itemize}
    \item \textbf{Adjacency and Non-Stationary BAI.} The main novelty of our paper stems from our characterization of non-stationary BAI as a problem of differentiating only between adjacent arms, a notion we formalize in later sections. The intuition for this restriction stems from our central Lemma \ref{lem:adjacency_optimal}, which implies that if an arm is better than its adjacent arms, then it is the optimal arm. Accordingly, we introduce a new arm-set-dependent complexity measure $ H_{\mathrm{Adjacent}}(\X)$ and show it strictly refines the minimax-optimal complexity $ H_G$. 
    \item \textbf{Arm-set-dependent lower bound.} We show in Theorem \ref{thm:finallower} that for fixed arm set $\X$, the error probability of any algorithm must be at least $\exp\left(-O\left(T/H_{\mathrm{Adjacent}}(\X)\right)\right)$. Our lower bound builds off of previous works \citep{kaufmann2016complexity, abbasi2018best} concerning lower bounds in best-arm identification for the multi-armed bandit setting. However, these methods alone are not sufficient for the linear setting, which we address by characterizing the lower bound as an optimization problem in Lemma \ref{thm:optlower}, and which we solve in Lemma \ref{lem:equality} by exploiting the adjacent geometry of the arm set.
    \item \textbf{Matching upper bound.} We introduce the Adjacent-optimal design, a specialization of the well-known $\X\Y$-optimal design \citep{soare2014best}, that instead reduces variance only between adjacent arms. Using the Adjacent-optimal design, we introduce the algorithm \algoname. Utilizing Lemma \ref{lem:adjacency_optimal}, we obtain Theorem \ref{thm:upper_bound1} showing that for fixed arm set $\X$, the error probability of \algoname is at most $\exp\left(-\Omega\left(T/H_{\mathrm{Adjacent}}(\X)\right)\right)$, verifying the tightness of our lower bound, and establishing $ H_{\mathrm{Adjacent}}(\X)$ as 
    the arm-set-dependent complexity measure of this setting.
\end{itemize}

\section{Related Work} \label{sec:related_work}
Best-arm identification (BAI) seeks to identify the optimal arm while ignoring cumulative reward or regret.  \citet{goos_pac_2002} marked the first substantial result in this area and the problem is now studied under two canonical settings: (i) the \emph{fixed-confidence} setting, where the learner attains a prescribed error probability target using as few pulls as possible, and (ii) the \emph{fixed-budget} setting, where the learner minimizes the error probability under a prescribed sampling budget; the latter was introduced by \citet{bubeck2009pure}.

\paragraph{Best-arm identification with fixed confidence.}
For multi-armed bandits, \citet{goos_pac_2002} established the first sample-complexity upper bound, with a nearly matching lower bound later given by \citet{mannor2004sample}. These bounds were subsequently tightened by \citet{karnin_almost_2013} and \citet{kaufmann2016complexity}. Notably, \citet{kaufmann2016complexity} derived a general information-theoretic lower bound that underpins many linear-bandit lower bounds \citep{NEURIPS2019_8ba6c657, NEURIPS2020_7212a656, camilleri2021selective}. In linear bandits, \citet{soare2014best} proposed the seminal $\mathcal{XY}$-allocation strategy, which has been refined and generalized in follow-up work \citep{tao2018best, xu2018fully, degenne_gamification_2020, NEURIPS2019_8ba6c657, NEURIPS2020_7212a656, NEURIPS2020_75800f73}. Nearly matching lower bounds were also established by \citet{NEURIPS2019_8ba6c657} and \citet{NEURIPS2020_7212a656}.

\paragraph{Best-arm identification with fixed budget.}
For multi-armed bandits, \citet{audibert2010best} provided nearly matching upper and lower bounds, with improvements due to \citet{karnin_almost_2013} and \citet{kaufmann2016complexity}. In contrast, comparatively little is known for linear bandits in this setting. Among existing results, \citet{NEURIPS2020_75800f73}—via a Gaussian-width analysis—gave the only arm-set-dependent upper bound. Methods based on G-optimal design yield arm-set-\emph{independent} guarantees \citep{ijcai2022p388, NEURIPS2022_4f9342b7}, as does the Bayesian-style approach of \citet{hoffman2014correlation}; similarly, a method based on the $\mathcal{XY}$-design in \citet{alieva_robust_2021} remains arm-set-independent. On the lower-bound side, \citet{NEURIPS2022_4f9342b7} proved a minimax bound, but to date, there is no arm-set-dependent lower bound for stationary linear bandits in the fixed-budget setting.

\paragraph{Best-arm identification in non-stationary environments.}
Compared with regret minimization, non-stationarity has received less attention in BAI. In the fixed-confidence regime, non-stationary multi-armed bandits under models different from ours were studied by \citet{jamieson2016non} and \citet{li2018hyperband}. In the fixed-budget regime, \citet{abbasi2018best} analyzed multi-armed bandits from a best-of-both-worlds perspective and established matching upper and lower bounds in a non-stationary setting; \citet{xiong2024b} extended the upper bounds to linear bandits. However, \citet{xiong2024b} does not provide lower bounds, and its non-stationary guarantees are arm-set-independent. Our work adopts a non-stationary model similar to \citet{abbasi2018best} and \citet{xiong2024b} and, to the best of our knowledge, presents the first arm-set-dependent lower bound for the fixed-budget setting.

\section{Preliminaries} \label{sec:preliminaries}
\paragraph{Notation.} 
For $n \in \N$, denote $[n] = \{1,\dots,n\}$. For a vector $x \in \R^d$ and a positive semidefinite matrix $A \in \mathbb{S}_{+}^d$, denote $\|x\|_{A} = \sqrt{x^\top Ax}$ as the Mahalanobis norm. For finite set $\X \subset \R^d$, and distribution $\lambda \in \triangle_\X$ over $\X$, denote $A(\lambda) = \sum_{x\in\X} \lambda_{x} xx^\top$. For $n \in \N$, denote $\Pi(n)$ as the set of all permutations on $[n]$.
\begin{definition}
    \label{def:adjacency}
    Given a finite set $\X\subset \R^d$, let $\mc{P}=\mathrm{conv}(\X)$ be the convex hull of $\X$, which is a polytope. We say $x\in\X$ is an \textbf{extreme point} if there exists $w \in \R^d$ such that \begin{equation}   
    \{x\} = \argmax_{y \in \mc{P}} y^\top w.\end{equation} Equivalently, the extreme points of $\X$ are the vertices of $\mc{P}$. Two distinct extreme points $x, x'\in\X$ are \textbf{adjacent} if there exists $w \in \R^d$ such that \begin{equation}  \conv(\{x,x'\}) = \argmax_{y\in \mc{P}} y^\top w.\end{equation} Equivalently, $x$ and $x'$ are adjacent if the line segment connecting them is an edge of $\mc{P}$.
\end{definition}
Let $\mc{V}_{\X}$ denote the set of extreme points of $\X$. We denote \begin{equation}
    \mc{Y}_{\X} := \left\{(x,x') \in \V_\X \times \V_\X: x\neq x'\right\}
\end{equation} as the set of all distinct extreme point pairs of $\X$. Further, we denote\begin{equation}
    \mc{I}_{\X} :=  \left\{(x,x') \in \Y_{\X} :x\text{ is adjacent to }x'\right\}
\end{equation}
 as the set of all adjacent pairs of $\X$, and, for $x\in \V_{\X}$, we denote \begin{equation}
    \mc{I}^{x}_{\X}:=\left\{x' \in \V_{\X} : (x,x') \in \I_{\X}
    \right\}
\end{equation} as the set of all points adjacent to $x$. It follows from this notation that \begin{equation}\label{eq:adjacentdef}(x,x') \in \mc{I}_{\X} \iff \exists~w \in \R^d \text{ s.t. } \{x,x'\} = \argmax_{y \in \V_{\X}} y^\top w.\end{equation}
We omit the $\X$ subscript when clear from context.

\section{BAI in Non-Stationary Linear Bandits} \label{sec:bai}
We adopt a standard model of non-stationary linear bandits with fixed horizon $T\in\mathbb{N}$. In particular, given a finite arm set $\X\subset \R^d$ with $|\X|=K$ and $\mathrm{span}(\X)=\R^d$, an adversary fixes the parameter sequence $\Bp{\theta_t}_{t=1}^T$, which remains unknown to the learner. Then, at each round $t=1, \dots, T$, the learner selects arm $x_t\in\X$ and observes a reward $r_t=x_t^\top\theta_t +\epsilon_t$, where each $\epsilon_t$ is independent, zero-mean, and 1-sub-Gaussian noise. Together, the arm set $\X$, parameter sequence $\Bp{\theta_t}_{t=1}^T$, and distribution of $\Bp{\epsilon_t}_{t=1}^T$ form an $\textbf{instance}$. The learner's objective is to identify the hindsight best arm $x_* =\argmax_{x\in\X}x^\top\otheta_T$, where $\otheta_T=\frac{1}{T}\sum_{t=1}^T\theta_t$, and for simplicity, we assume that this best arm is unique.

A central measure of difficulty in bandit problems is the reward gap between the first and second best arm. Here, we deviate slightly from the standard definition by defining this gap only in terms of extreme points $\mc{V}$. This restriction is natural since by definition, non-extreme point arms can never be the best arm, and thus do not affect problem difficulty. Formally, we define the \textbf{min-gap} $\Delta_{(1)}$ as the following:
\begin{equation}
    \Delta_{(1)} := \min_{x \in \V\setminus \{x_*\}}(x_{_*} - x)^{\top}\otheta_{T}.
\end{equation}Finally, for event $\mc{E}$, we denote $\P_{\Bp{\theta_t}_{t=1}^T}(\mc{E})$ as the probability of $\mc{E}$ under instance with parameter sequence $\Bp{\theta_t}_{t=1}^T$. Accordingly, we measure algorithmic performance by the error probability $\P_{\Bp{\theta_t}_{t=1}^T}(\widehat{x}\neq x_*)$, where $\widehat{x}$ denotes the arm output by the learner. We omit the dependence on $\Bp{\theta_t}_{t=1}^T$ when clear from context.

\subsection{Minimax-Optimal Complexity in Non-Stationary BAI}
In prior work in the non-stationary setting, \citet{xiong2024b} showed that uniform sampling from the well-known G-optimal design \citep{lattimore2020bandit}
\begin{equation}\label{eq:g-optimal}
    \lambda^{G} := \argmin_{\lambda \in \triangle_{\X}} \max_{x\in\X}\|x\|^2_{A(\lambda)^{-1}}
\end{equation}
leads to a minimax-optimal error probability of \begin{equation}
    \exp\left(-\Theta\left(\frac{T} {H_{G}\left(\Delta_{(1)}\right)} \right)\right),\end{equation} where \begin{equation}
    H_{G}\left(\Delta_{(1)}\right) := \frac{d}{\Delta_{(1)}^2}.
\end{equation} The relationship between $H_{G}$ and the G-optimal design stems from the Kiefer–Wolfowitz Theorem \citep{lattimore2020bandit}, which states that \begin{equation}\label{eq:kiefer}
    \min_{\lambda \in \triangle_{\X}} \max_{x\in\X}\|x\|^2_{A(\lambda)^{-1}} = d.
\end{equation} $H_{G}$ is minimax-optimal since if the arm set is restricted to the standard basis vectors, $H_{G}$ matches the optimal complexity for the multi-armed bandit setting of $H_{\mathrm{UNIF}} = \frac{K}{\Delta_{(1)}^2}$ \citep{abbasi2018best}. However, this notion of complexity is overly pessimistic in the linear bandit setting as it ignores potential advantages arising from arm sets with more complex structure. In contrast, we refine this notion of complexity by establishing an arm-set-dependent complexity that adapts to the geometry of any given arm set.

\subsection{Adjacency and Non-Stationary BAI}
The main results of our paper are motivated by the following central lemma.
\begin{lemma}[Adjacency Lemma]\label{lem:adjacency_optimal}
    Let $\X \subset \R^d$, and $\theta\in\R^d$. For any $x \in \V$, there exists $y \in  \X $ such that $(y-x)^\top \theta > 0$ if and only if there exists $z \in \I^x$ such that $(z-x)^\top \theta > 0$.
\end{lemma}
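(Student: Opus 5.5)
The reverse implication is immediate: $\I^x\subseteq\V\subseteq\X$, so any $z\in\I^x$ with $(z-x)^\top\theta>0$ serves as the required $y$. All the work is in the forward direction. There I will prove the contrapositive: if $(z-x)^\top\theta\le 0$ for every $z\in\I^x$, then $(y-x)^\top\theta\le 0$ for every $y\in\X$. Equivalently, $x$ maximizes $y\mapsto y^\top\theta$ over $\mc{P}=\conv(\X)$.

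The approach is the standard local-to-global optimality argument for linear functions on polytopes, which is also what makes the simplex method correct. The key geometric fact I will use is
\[
\mc{P}\subseteq x+\mathrm{cone}\{z-x : z\in\I^x\},
\]
that is, the polytope is contained in the cone at the vertex $x$ spanned by its incident edges. Granting this, the lemma follows at once. Any $y\in\X\subseteq\mc{P}$ can be written as $y=x+\sum_{z\in\I^x}\mu_z(z-x)$ with $\mu_z\ge 0$. Then $(y-x)^\top\theta=\sum_z\mu_z(z-x)^\top\theta\le 0$.

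To establish the cone inclusion, I will first take a hyperplane cutting off $x$. Since $x$ is a vertex, Definition~\ref{def:adjacency} gives $w$ with $\{x\}=\argmax_{\mc{P}}y^\top w$. Let $\gamma$ be the second-largest value of $v^\top w$ over $v\in\V$, and choose $c$ with $\gamma<c<x^\top w$. The slice $Q=\mc{P}\cap\{y:y^\top w=c\}$ is then a polytope whose vertices are exactly the points where the hyperplane crosses the edges of $\mc{P}$ incident to $x$. Each such point has the form $x+t_z(z-x)$ with $t_z>0$, one for each $z\in\I^x$.

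Next I take an arbitrary $y\in\mc{P}$ with $y\neq x$. We have $y^\top w<x^\top w$, because $x$ is the unique maximizer of $w$ over $\mc{P}$. The ray from $x$ through $y$ then meets the hyperplane at a point of $Q$, since $\mc{P}$ is convex and both $x$ and $y$ lie in it. That point is a convex combination of the vertices of $Q$, so $y-x$ is a nonnegative combination of the vectors $z-x$, which gives the cone inclusion.

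The main obstacle is justifying that the vertices of the slice $Q$ are exactly the intersections with edges through $x$. The argument goes as follows. Every vertex of $Q$ is the intersection of the hyperplane with some face of $\mc{P}$, and a vertex of the slice must come from a 1-dimensional face, i.e. an edge. That edge must straddle the hyperplane. By the choice of $c$, $x$ is the only vertex of $\mc{P}$ on the far side, so the edge must contain $x$; hence it joins $x$ to some $z\in\I^x$.

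If this face-lattice argument feels too heavy, a cleaner alternative is to prove the contrapositive directly with LP duality or Farkas' lemma. In the cone $C=\mathrm{cone}(\mc{P}-x)$, every extreme ray is generated by an edge direction at $x$. This follows from the fact that the tangent cone of a polytope at a vertex is generated by its edge directions, a standard fact in polytope theory (e.g.\ Ziegler, \emph{Lectures on Polytopes}). I will likely cite this fact rather than reprove it in full.
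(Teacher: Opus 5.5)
Your proposal is correct and follows the paper's proof: the reverse direction is trivial, and the forward direction expresses $y-x$ as a nonnegative combination of the edge directions $z-x$, $z\in\I^x$, using the cone inclusion $\conv(\X)\subseteq x+\mathrm{cone}\{z-x: z\in\I^x\}$, which the paper simply cites as Lemma 3.6 of Ziegler. If you prove that inclusion yourself via the vertex figure, note that convexity alone puts the crossing of the ray from $x$ through $y$ with the hyperplane inside $Q$ only when $y^\top w\le c$; so run the argument just for vertices $y\in\V\setminus\{x\}$ (all of which satisfy $y^\top w\le\gamma<c$), and then use that $x+\mathrm{cone}\{z-x: z\in\I^x\}$ is convex and contains $x$, hence contains all of $\conv(\X)$.
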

 The formal proof of Lemma \ref{lem:adjacency_optimal} is given in Appendix \ref{appendixupper}. The idea behind the proof is quite simple: it is well known \citep{Ziegler1995} that for a polytope $P$ and a vertex $x \in P$,
\begin{equation}\label{eq:cone}
    P \subseteq x+ \mathrm{cone}\left( \left\{x'-x : x' \in \I^x\right\} \right).
\end{equation}
Thus, the quantity $(y-x)^\top \theta$ must be a conic (non-negative) combination of the points in the set $\left\{(x'-x)^\top\theta : x'\in \I^x\right\}$. Lemma \ref{lem:adjacency_optimal} follows immediately from this fact.

One implication of Lemma \ref{lem:adjacency_optimal} is that the first and second best arms of any instance must be adjacent. When constructing hard instances for our lower bound in Section \ref{sec:lower_bound}, the key strategy is to construct two instances that are hard to distinguish from each other but have different best arms. Specifically, we construct one instance with best arm $x$ and second best arm $x'$, and another instance with best arm $x'$ and second best arm $x$. Lemma \ref{lem:adjacency_optimal} tells us that to be able to do this, $x$ and $x'$ need to be adjacent. Another implication of Lemma \ref{lem:adjacency_optimal} comes from its negation, that is, if an arm is better than all its adjacent arms, then it must be the optimal arm. This implies that when identifying the best arm, we need only accurate comparisons between adjacent arms, a detail we use to design our algorithm with matching upper bound in Section \ref{sec:algorithm}. The main takeaway is that the distinguishability between adjacent arms solely determines the difficulty of identification. Given this, we introduce the following complexity measure
\begin{equation}\label{eq:hadj-def}
    H_{\mathrm{Adjacent}}\left(\X, \Delta_{(1)}\right) :=\frac{\min_{\lambda\in\triangle_{\X}}\max_{(x, x')\in\mc{I}}\Norm{x-x'}^2_{A(\lambda)^{-1}}}{\Delta_{(1)}^2}.
    \end{equation}
 By Eq. \eqref{eq:kiefer}, we observe that for any arm set $\X$ \begin{equation}
     H_{\mathrm{Adjacent}}\left(\X, \Delta_{(1)}\right) \leq 4H_{G}\left(\Delta_{(1)}\right).
\end{equation} Importantly, the above inequality can be arbitrarily loose for dense arm sets, showing that $H_{\adj}$ strictly refines the minimax-optimal complexity of \citet{xiong2024b}. For example, let $\mathcal{C}(K) \subset \R^2$ represent the arm set consisting of $K$ uniformly spaced arms on the unit circle. It is not hard to see that \begin{equation}
    \lim_{K \rightarrow \infty }\frac{H_{\mathrm{Adjacent}}\left(\mathcal{C}(K), \Delta_{(1)}\right)}{H_{G}\left(\Delta_{(1)}\right)} = \lim_{K \rightarrow \infty }\frac{\min_{\lambda\in\triangle_{\mathcal{C}(K)}}\max_{(x, x')\in\mc{I}_{\mathcal{C}(K)}}\Norm{x-x'}^2_{A(\lambda)^{-1}}}{2}= 0,
\end{equation}
since as the circle becomes more crowded, the distances between adjacent arms shrink toward zero. In the following sections, we establish $H_{\adj}$ as the arm-set-dependent complexity of this setting through a matching error probability lower and upper bound of \begin{equation}
    \exp\left(- \Theta\left(\frac{T} {H_{\mathrm{Adjacent}}\left(\X, \Delta_{(1)}\right)}\right) \right).\end{equation} 
\section{Lower Bound of BAI in Non-Stationary Linear Bandits} \label{sec:lower_bound}
We present, to the best of our knowledge, the first arm-set-dependent lower bound for BAI in the non-stationary linear bandit setting.
\begin{theorem}
    \label{thm:finallower}
    Fix $\X \subset \R^d$ and $\Delta_{(1)} > 0$. For any algorithm, there exist parameter sequences $\Bp{\theta_t}_{t=1}^{T}$ and $\Bp{\theta'_t}_{t=1}^{T}$, both with min-gap at least $\Delta_{(1)}$, such that \begin{equation}\label{eq:lower-main}
    \max\Bp{\P_{\Bp{\theta_t}_{t=1}^{T}}\Sp{\widehat{x}\neq x_*}, \P_{\Bp{\theta'_t}_{t=1}^{T}}\Sp{\widehat{x}\neq x_*}}\geq\frac{1}{4}\exp\Sp{-\frac{4T}{H_{\mathrm{Adjacent}}\left(\X, \Delta_{(1)}\right)}},
    \end{equation}
    where $ H_{\mathrm{Adjacent}}\left(\X, \Delta_{(1)}\right)$ is given by Eq. \eqref{eq:hadj-def}.
\end{theorem}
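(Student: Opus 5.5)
Given an algorithm, we run the standard change-of-measure argument of \citet{kaufmann2016complexity} and \citet{abbasi2018best}, adapted to exploit non-stationarity, with Gaussian noise $\epsilon_t\sim\mc{N}(0,1)$.

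\textbf{Step 1: reduction to a saddle point.} Fix an adjacent pair $(x,x')\in\I$ and a direction $w$. The base instance is stationary, $\theta_t=\theta$. For the alternative instance, we choose a time-varying perturbation $\theta'_t=\theta+\delta_t$. Let $N_y(t)$ be the number of pulls of $y$ up to time $t$ under $\P_{\{\theta_t\}}$, and $\tau_t$ the arm pulled at time $t$. The KL divergence between the laws of the observations is
\[
\mathrm{KL}=\tfrac12\sum_{t=1}^T\E\bigl[(\tau_t^\top\delta_t)^2\bigr].
\]
Because the adversary may place the perturbation at any rounds and in any direction (subject to $\frac1T\sum_t\delta_t$ flipping the best arm), the alternative can be chosen against the algorithm's expected sampling proportions $\lambda_y=\E[N_y(T)]/T$. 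The simplest choice is stationary on both sides, $\delta_t\equiv\delta$, which already gives $\mathrm{KL}=\frac{T}{2}\|\delta\|^2_{A(\lambda)}$. By the Bretagnolle--Huber inequality,
\[
\max\{\P_{\theta}(\widehat x\neq x_*),\P_{\theta'}(\widehat x\neq x_*)\}\ge\tfrac14 e^{-\mathrm{KL}}.
\]
It therefore suffices to show: for every $\lambda\in\triangle_\X$ there exist $\theta,\theta'$ with min-gaps at least $\Delta_{(1)}$, different best arms, and
\[
\tfrac T2\|\theta-\theta'\|^2_{A(\lambda)}\le 4T\Delta_{(1)}^2\Big/\min_{\mu}\max_{(x,x')\in\I}\|x-x'\|^2_{A(\mu)^{-1}}.
\]
The $\lambda$ here depends on the algorithm, so a uniform statement over $\lambda$ is exactly what we need. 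This is the optimization formulation (the analogue of Lemma~\ref{thm:optlower}).

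\textbf{Step 2: the construction for a fixed $\lambda$.} Take the pair $(x,x')\in\I$ maximizing $\|x-x'\|_{A(\lambda)^{-1}}$; its value is at least the minimax value $M:=\min_\mu\max_{\I}\|x-x'\|^2_{A(\mu)^{-1}}$. By \eqref{eq:adjacentdef} there is $w$ with $\{x,x'\}=\argmax_{y\in\V}y^\top w$. Set $v=A(\lambda)^{-1}(x-x')/\|x-x'\|^2_{A(\lambda)^{-1}}$, so that $(x-x')^\top v=1$. Define
\[
\theta=cw+\Delta_{(1)}v,\qquad \theta'=cw-\Delta_{(1)}v.
\]
\begin{itemize}
\item \emph{Best arms and min-gaps.} Take $c$ large; the scale of $w$ is free. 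Then $x,x'$ are the top two arms under both $\theta$ and $\theta'$, with best arm $x$ resp.\ $x'$ and gap $(x-x')^\top\theta=\Delta_{(1)}$ resp.\ $-\Delta_{(1)}$. Every other vertex is worse by a margin of order $c$, so both min-gaps are at least $\Delta_{(1)}$; if needed, rescale $v$ by a factor $2$ to ensure strictness.
\item \emph{KL bound.} We have
\[
\|\theta-\theta'\|^2_{A(\lambda)}=4\Delta_{(1)}^2\|v\|^2_{A(\lambda)}=\frac{4\Delta_{(1)}^2}{\|x-x'\|^2_{A(\lambda)^{-1}}}\le\frac{4\Delta_{(1)}^2}{M},
\]
so $\mathrm{KL}\le 2T\Delta_{(1)}^2/M$, which is within the constant $4$.
\end{itemize}

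\textbf{Step 3: the main obstacle, adaptivity.} The difficulty is that $\lambda$ is only an expectation under $\P_\theta$, while $\theta$ itself must be chosen using $\lambda$. This is circular, and it is exactly where the non-stationary/adversarial model is needed.

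Our first approach is to choose the instance pair from the algorithm's behaviour on a fixed reference instance. Take $\theta_0=cw$-type instances with a tie, or more concretely a minimax argument: consider the finite family of pairs $\{(\theta_{(x,x')},\theta'_{(x,x')})\}$ over $\I$. Since $\lambda$ enters only linearly and the objective is a max over finitely many pairs, a von Neumann minimax swap over mixtures on $\I$ (formalized as Lemma~\ref{lem:equality}) lets us fix a pair independent of $\lambda$ without losing the value $M$.

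The issue that remains is that $w$ varies with the pair, and so does the allocation under each pair. To handle this, we use the midpoint instance $\theta_{\mathrm{mid}}=cw$, which is common to both sides of the pair, and bound the KL divergences from $\theta_{\mathrm{mid}}$ to each of $\theta,\theta'$ under the allocation of $\theta_{\mathrm{mid}}$. Each of these is $\tfrac T2\Delta_{(1)}^2\|v\|^2_{A(\lambda)}$. Applying Bretagnolle--Huber from $\theta_{\mathrm{mid}}$ to both alternatives, and noting that $\theta_{\mathrm{mid}}$ must output something that is wrong for one of them, gives the $\tfrac14 e^{-4T/H_{\adj}}$ bound with constants to spare.
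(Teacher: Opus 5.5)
Your Step 2 is correct for a \emph{fixed} $\lambda$: it is the paper's Lemma~\ref{lem:optupper}, and your stationary KL $2T\Delta_{(1)}^2/\|x-x'\|^2_{A(\lambda)^{-1}}$ is even a factor $2$ better than the paper's. The gap is in Step 3. There you flag the circularity, but you never break it, and it cannot be broken with your instances.

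All three of $\theta_{\mathrm{mid}}=cw$, $\theta=cw+\Delta_{(1)}v$ and $\theta'=cw-\Delta_{(1)}v$ are stationary and contain the pair-specific component $cw$. So the allocation $\lambda$ induced under $\theta_{\mathrm{mid}}$ depends on $(x,x')$. Your requirement that $(x,x')$ maximize $\|x-x'\|_{A(\lambda)^{-1}}$ for \emph{its own} induced $\lambda$ is a fixed-point condition that nothing guarantees. The von Neumann swap does not help: a prior over pairs does not make the allocation pair-independent when the observations reveal the pair. Also, Lemma~\ref{lem:equality} is the closed-form value of the inner $(\theta,v)$ problem, not a minimax theorem.

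Here is a concrete failure. Take $\X=\{e_1,\dots,e_d\}$. Every pair is adjacent and $\min_\mu\max_{\I}\|x-x'\|^2_{A(\mu)^{-1}}=2d$, so the target bound is $\frac14\exp(-2T\Delta_{(1)}^2/d)$. Consider the algorithm that samples uniformly for $\epsilon T$ rounds, keeps the two empirically largest arms, and splits the rest of the budget between them. On every instance $c(e_i+e_j)\pm\Delta_{(1)}v$ with $c$ large, it identifies $\{e_i,e_j\}$ with overwhelming probability. It then errs with probability about $\exp(-(1-\epsilon)T\Delta_{(1)}^2/8)$. That is far below the target once $d>16/(1-\epsilon)$ and $T$ is large. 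So no instance pair of your form certifies the theorem. Separately, Bretagnolle--Huber from a tied midpoint gives nothing on its own; you would need the $\mathrm{kl}$ data-processing inequality, which yields $\frac14 e^{-2\mathrm{KL}}$.

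The missing idea is to use non-stationarity to \emph{time-separate} the distinguishing information from the pair-specific component. The paper sets $\theta_t=0$, $\theta'_t=v$ for $t\le T/2$, and $\theta_t=\theta'_t=\theta$ for $t>T/2$.
\begin{itemize}
\item The base instance is pure noise in the first half. So the first-half allocation $\lambda_x=\frac{2}{T}\sum_{t\le T/2}\P(x_t=x)$ is fixed by the algorithm alone, before the pair, $v$ and $\theta$ are chosen.
\item The second half carries the large $\alpha w$ needed for feasibility against the other vertices. It contributes zero KL because the two instances agree there.
\item The averaged parameters are $\theta/2$ and $(\theta+v)/2$, and the KL is $\frac{T}{4}v^\top A(\lambda)v$.
\item After rescaling, Lemma~\ref{lem:equality} gives $4T\Delta_{(1)}^2/\|x-x'\|^2_{A(\lambda)^{-1}}$. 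One then picks the pair maximizing $\|x-x'\|_{A(\lambda)^{-1}}$ for this now-fixed $\lambda$.
\end{itemize}
A common stationary reference such as $\theta_{\mathrm{ref}}=0$ would also remove the circularity. But then the alternative alone must beat \emph{all} vertices by $\Delta_{(1)}$, and its KL $\frac{T}{2}\|\theta\|^2_{A(\lambda)}$ is no longer controlled by $\|x-x'\|_{A(\lambda)^{-1}}$. It is precisely the non-stationary split that lets the $cw$ component be shared by both instances without being visible while the distinguishing data are collected.
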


The proof is deferred to Appendix \ref{appendixlower}, of which we include a sketch in the next subsection. In Section \ref{sec:algorithm}, we present an algorithm that achieves the lower bound in Theorem \ref{thm:finallower}, verifying the tightness of our bound, and establishing $ H_{\mathrm{Adjacent}}$ as the arm-set-dependent complexity of this setting.

\subsection{Proof Sketch of Theorem \ref{thm:finallower}}
In this section, we give a proof sketch of Theorem \ref{thm:finallower}. We first obtain Lemma \ref{thm:genlower}, which characterizes the lower bound by the \emph{KL divergence} between two non-stationary multi-armed bandit instances and is a generalization of the stationary lower bound proposed in Lemma 15 of \citet{kaufmann2016complexity}. Generalizing the notation from \citet{kaufmann2016complexity}, the statement of Lemma \ref{thm:genlower} is as follows:
\begin{lemma}
    \label{thm:genlower}
    Let $\nu = \Bp{\nu_t}_{t=1}^{T}$ and $\nu' =\Bp{\nu'_t}_{t=1}^{T} $ be non-stationary bandit models with different best arms. Then
    \begin{equation}\label{eq:genlower}
    \max\left(\P_{\nu}\Sp{\widehat{k}\neq k_*}, \P_{\nu'}\Sp{\widehat{k}\neq k_*}\right) \geq \frac{1}{4} \exp \left(-\sum_{k=1}^K\sum_{t=1}^T\P_{\nu}(A_t=k) \mathrm{KL}(\nu_{t,k}, \nu'_{t,k})\right).
    \end{equation}
\end{lemma}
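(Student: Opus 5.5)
We follow the change-of-measure argument of Lemma 15 of \citet{kaufmann2016complexity}, replacing the stationary per-arm KL terms with time-indexed ones. Let $\mathcal{F}_T$ be the $\sigma$-algebra generated by the learner's internal randomness, actions $A_1,\dots,A_T$ and observations $Y_1,\dots,Y_T$. The argument has three steps.

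\textbf{Step 1 (transportation identity).} Define the log-likelihood ratio
\begin{equation*}
L_T=\sum_{t=1}^T \log\frac{d\nu_{t,A_t}}{d\nu'_{t,A_t}}(Y_t).
\end{equation*}
The learner's policy kernels are identical under both models and cancel in the ratio of path densities. Conditioning on $\mathcal{F}_{t-1}$ and $A_t$, the tower rule gives
\begin{equation*}
\E_\nu[L_T]=\sum_{k=1}^K\sum_{t=1}^T \P_\nu(A_t=k)\,\mathrm{KL}(\nu_{t,k},\nu'_{t,k}).
\end{equation*}
Equivalently, by the chain rule for KL, $\mathrm{KL}(\P_\nu|_{\mathcal{F}_T},\P_{\nu'}|_{\mathcal{F}_T})$ equals the right-hand exponent in Eq.~\eqref{eq:genlower}. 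This is the only place non-stationarity enters: each round contributes the divergence of the distribution active at that time.

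\textbf{Step 2 (Bretagnolle--Huber).} For any event $E\in\mathcal{F}_T$, the Bretagnolle--Huber inequality gives
\begin{equation*}
\P_\nu(E^c)+\P_{\nu'}(E)\ge \tfrac12\exp\bigl(-\mathrm{KL}(\P_\nu|_{\mathcal{F}_T},\P_{\nu'}|_{\mathcal{F}_T})\bigr).
\end{equation*}
Take $E=\{\widehat{k}=k_*(\nu)\}$. Because $k_*(\nu)\neq k_*(\nu')$, on $E$ the output differs from the best arm of $\nu'$. Hence $\P_{\nu'}(E)\le \P_{\nu'}(\widehat{k}\neq k_*)$, while $\P_\nu(E^c)=\P_\nu(\widehat{k}\neq k_*)$.

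\textbf{Step 3 (max over two models).} The sum of the two error probabilities is at most twice their maximum. Combining with Step 2 yields the stated bound with constant $\tfrac14$.

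\textbf{Main obstacle.} The delicate point is Step 1: the identity for the path KL must be justified for an adaptive learner with internal randomization. It also needs a convention when some $\mathrm{KL}$ is infinite, in which case the bound is trivial. We would handle this by writing the path density explicitly: policy kernels times $\prod_t$ of the observation densities. We would assume $\nu_{t,k}\ll\nu'_{t,k}$, which holds in the Gaussian instances used later, so that the policy factors cancel. If one prefers to avoid Bretagnolle--Huber, an alternative is the data-processing route of \citet{kaufmann2016complexity}: apply data processing to the Bernoulli event $E$ to get $\mathrm{kl}(\P_\nu(E),\P_{\nu'}(E))\le \mathrm{KL}$. Then use $\mathrm{kl}(p,q)\ge \log\frac{1}{4\max(1-p,q)}$ when $p\ge 1-q$, which gives the same $\tfrac14$ constant directly.
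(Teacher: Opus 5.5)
Your proof is correct. It shares the paper's two essential ingredients: a Bretagnolle--Huber/Tsybakov-type inequality that produces the constant $\frac14$, and the tower-rule identity $\E_\nu[L_T]=\sum_{k=1}^K\sum_{t=1}^T\P_\nu(A_t=k)\,\mathrm{KL}(\nu_{t,k},\nu'_{t,k})$.

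Where you differ is the level at which the inequality is applied.

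The paper applies it to the laws $\rho_\nu,\rho_{\nu'}$ of the output $\widehat{k}$, borrowing this step from the proof of Lemma 15 of Kaufmann et al. It then has to show $\mathrm{KL}(\rho_\nu,\rho_{\nu'})\le\E_\nu[L_T]$. For this it re-proves their change-of-measure identity (Lemma 18) for time-indexed densities by induction. It then combines that identity with their Lemma 19 and conditions on $\widehat{k}$, which amounts to a hand-rolled data-processing inequality.

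You instead identify $\E_\nu[L_T]$ with the KL divergence between the full path laws on $\mathcal{F}_T$. You then apply Bretagnolle--Huber directly to the event $\{\widehat{k}=k_*(\nu)\}$. Data processing is thereby implicit, and the inductive change-of-measure lemma is not needed. Your route is shorter and self-contained. The paper's route mainly buys verbatim reuse of the Kaufmann et al. lemmas, including their stopping-time formulation, which the fixed-budget statement does not need.

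One small remark on your alternative route: the bound $\mathrm{kl}(p,q)\ge\log\frac{1}{4\max(1-p,q)}$ holds for all $p,q$, since it is just Bretagnolle--Huber for two Bernoulli laws. The restriction $p\ge 1-q$ is therefore unnecessary.
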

The proof of this lemma is deferred to Appendix \ref{appendixlower} and very closely follows that of Lemma 15 in \citet{kaufmann2016complexity}. Applying Lemma \ref{thm:genlower} requires two instances with different best arms, which motivates the following definition.
\begin{definition}
    \label{def:feasibility}
    Given fixed arm set $\X$ and $\Delta_{(1)} > 0$, two parameter sequences $\{\theta_t\}_{t=1}^{T}$ and $\{\theta'_t\}_{t=1}^{T}$ are \textbf{feasible} if they have different best arms, and each has min-gap at least $\Delta_{(1)}$.
\end{definition}
We now proceed with the proof sketch of Theorem \ref{thm:finallower}.
\paragraph{First step.} We use Lemma \ref{thm:genlower} to obtain Lemma \ref{thm:optlower} which states the following.
\begin{lemma}[Optimization-based Lower Bound]
    \label{thm:optlower}
    Fix $\X \subset \R^d$ and $\Delta_{(1)} > 0$. For any algorithm, there exist parameter sequences $\Bp{\theta_t}_{t=1}^{T}$ and $\Bp{\theta'_t}_{t=1}^{T}$, both with min-gap at least $\Delta_{(1)}$, such that 
    \begin{equation}
        \max\Bp{\P_{\Bp{\theta_t}_{t=1}^{T}}\Sp{\widehat{x}\neq x_*}, \P_{\Bp{\theta'_t}_{t=1}^{T}}\Sp{\widehat{x}\neq x_*}} \geq \frac{1}{4} \exp \left(- T f\left(\X, \Delta_{(1)}\right) \right),
    \end{equation}
    where
    \begin{equation}\label{eq:optimization}f\left(\X, \Delta_{(1)}\right) := \begin{array}{rl}
        \max_{\lambda\in\triangle_{\X}}\min_{(x,x') \in \mc{Y}}\min_{\theta, v\in\R^d} & v^\top A(\lambda)v \\
        \mbox{\rm subject to} & (x - y)^\top\theta\geq \Delta_{(1)} \quad\forall~ y \in \V \setminus \{x\}\\
        & (x' - y)^\top(\theta + v)\geq \Delta_{(1)}\quad\forall~ y \in \V \setminus \{x'\}
    \end{array}.
    \end{equation}
\end{lemma}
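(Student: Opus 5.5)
We will derive Lemma \ref{thm:optlower} from Lemma \ref{thm:genlower} by building, for a given algorithm, a pair of feasible instances whose parameter sequences are \emph{constant} over time. Each instance is then a linear bandit with Gaussian noise $\mathcal{N}(0,1)$, which is 1-sub-Gaussian and therefore allowed. We identify arms with $k\in[K]$, writing $x_k$ for arm $k$. The two instances are $\theta_t\equiv\theta$ and $\theta'_t\equiv\theta+v$. This gives $\mathrm{KL}(\nu_{t,k},\nu'_{t,k}) = \frac12 (x_k^\top v)^2$. Set $\lambda_k := \frac1T\sum_{t=1}^T \P_\nu(A_t=k)$. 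Then $\lambda\in\triangle_\X$, and the exponent in Eq. \eqref{eq:genlower} equals
\[
\sum_{k}\sum_t \P_\nu(A_t=k)\tfrac12 (x_k^\top v)^2 = \tfrac{T}{2}\, v^\top A(\lambda) v \le T\, v^\top A(\lambda) v .
\]
The factor $\tfrac12$ only helps, and we may discard it to match the stated form.

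The key subtlety is that $\lambda$ depends on the algorithm \emph{and} on the first instance $\theta$. The order of quantifiers in $f$ ($\max_\lambda \min_{(x,x')}\min_{\theta,v}$) lets the adversary respond to $\lambda$. This is legitimate only if $\lambda$ does not depend on $\theta$. I would handle this in one of two ways.

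\textbf{First option.} The bound is invariant under translations that do not change the best arm. For a fixed pair $(x,x')$, consider all feasible $(\theta,v)$. The algorithm's allocation under instance $\theta$ may vary with $\theta$, so a direct argument fails.

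\textbf{Second option (preferred).} Use the symmetric version of the Kaufmann-style bound: change measure from $\nu'$ instead, or use the allocation averaged over both instances. Most cleanly, fix a reference. Pick any $\theta_0$ and let $\lambda^0$ be the expected allocation under $\theta_0$. Then choose $(x,x')$ and the pair $(\theta,v)$ relative to $\lambda^0$. This still requires $\theta=\theta_0$.

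So I would instead make the construction depend on $\lambda$ directly. Fix an arbitrary base instance $\theta_0$ whose best arm $x$ has min-gap at least $\Delta_{(1)}$, which is possible since $x\in\V$. Let $\lambda$ be the algorithm's expected allocation under $\theta_0$. Choose $x'$ and $v$ minimizing $v^\top A(\lambda)v$ subject to the second constraint with $\theta=\theta_0$. The resulting value is at least the inner $\min_{(x,x')}\min_{\theta,v}$ at this $\lambda$, so it is bounded by $f$ only after the following step. The subtlety is that the minimum over $\theta$ in $f$ could be strictly smaller than the value at $\theta_0$.

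To fix this, note that $\theta_0$ is ours to choose, but $\lambda$ depends on it. The pragmatic resolution is to observe that the min over $\theta$ can be absorbed. For any feasible $(\theta,v)$ with fixed $\lambda$, the objective depends only on $v$. Feasibility only requires that some $\theta$ have best arm $x$ with the stated gap and that $\theta+v$ have best arm $x'$ with the stated gap. Hence we need the first instance to be exactly that $\theta$. The plan is therefore:
\begin{itemize}
\item argue via a minimax/fixed-point step that the quantity $g(\lambda)=\min_{(x,x'),\theta,v} v^\top A(\lambda)v$ satisfies $\min_{\theta}\,[\,\text{value at } \lambda_{\mathrm{alg}}(\theta)\,]\le \max_\lambda g(\lambda)=f$;
\item equivalently, show that for each candidate $\theta$ the bound holds with the algorithm's own $\lambda(\theta)$, and that some $\theta$ achieves a value at most $g(\lambda(\theta))\le f$.
\end{itemize}

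The main obstacle is this quantifier issue. The first thing I would try is to note that $\max_\lambda g(\lambda)\ge g(\lambda(\theta))$ holds trivially for every $\theta$. What then remains is to show that the minimizing $\theta$ in $g(\lambda(\theta))$ can be taken equal to $\theta$ itself. I would attempt this by exploiting the structure of the constraints: they are invariant under adding to $\theta$ any vector $u$ with $(x-y)^\top u\ge0$, which leaves $v$ unaffected. If that invariance is not enough, I would fall back on a Kakutani-style fixed-point argument over the compact set of normalized $\theta$.
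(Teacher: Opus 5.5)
\textbf{There is a genuine gap: you identify the obstacle but never resolve it.} With constant sequences $\theta_t\equiv\theta$ and $\theta'_t\equiv\theta+v$, the allocation $\lambda=\lambda(\theta)$ depends on the instance. So you cannot let the adversary choose $(x,x',\theta,v)$ after $\lambda$, as the $\max_\lambda\min$ order in $f$ requires.

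The fixes you sketch do not work:
\begin{itemize}
\item The claimed invariance is false. Adding $u$ with $(x-y)^\top u\ge 0$ to $\theta$ preserves the first block of constraints. But for $y=x$ in the second block it gives $(x'-x)^\top(\theta+u+v)\le (x'-x)^\top(\theta+v)$, so feasibility of $v$ can fail. In any case, moving $\theta$ moves $\lambda(\theta)$, which is the whole problem.
\item A Kakutani argument has no footing. The set of admissible $\theta$ is a non-convex, unbounded union of disjoint polyhedra, one per candidate best arm. The best-response correspondence need not be convex-valued. And $\theta\mapsto\lambda(\theta)$ is an arbitrary algorithm-dependent map.
\end{itemize}
More fundamentally, if the constant-sequence route worked, it would give an arm-set-dependent lower bound for the \emph{stationary} fixed-budget problem. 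The paper explicitly lists that as open. On stationary instances an adaptive algorithm can tailor its allocation to the $\theta$ it faces, so the stationarity of your instances is the real obstruction, not a technicality.

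\textbf{The missing idea is to use non-stationarity to decouple $\lambda$ from the adversary's choices by splitting the horizon.} For $t\le T/2$ set $\theta_t=0$ and $\theta'_t=v$; for $t>T/2$ set $\theta_t=\theta'_t=\theta$. Then:
\begin{itemize}
\item The second half contributes zero KL.
\item On the first half, the algorithm cannot see the future, so its allocation under $\{\theta_t\}$ depends only on the fixed reference sequence $0,\dots,0$. Hence $\lambda_x:=\frac{2}{T}\sum_{t\le T/2}\P(x_t=x)$ is determined by the algorithm alone.
\item Lemma \ref{thm:genlower} with Gaussian noise gives exponent $\frac{T}{4}v^\top A(\lambda)v$.
\item The averaged parameters are $\theta/2$ and $(\theta+v)/2$. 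Writing $\tilde\theta=\theta/2$ and $\tilde v=v/2$ turns the exponent into exactly $T\,\tilde v^\top A(\lambda)\tilde v$, with the constraints of $f$ imposed on $(\tilde\theta,\tilde v)$ for the chosen pair.
\end{itemize}
Because $\lambda$ is now fixed before $(x,x')\in\Y$ and $(\tilde\theta,\tilde v)$ are chosen, you may minimize over all of them. Bounding by the worst case over $\lambda$ then yields $\tfrac14\exp(-Tf(\X,\Delta_{(1)}))$. Your ``reference $\theta_0$'' idea was one step away from this. The reference instance only needs to be in force on the part of the horizon where the two instances differ; the remaining part, where they coincide, is free to steer the averages to any feasible pair.
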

 To prove Lemma \ref{thm:optlower}, the core idea behind our construction is to split the horizon into two distinct halves, similar to that of \citet{abbasi2018best}. Unlike their explicit construction, however, we characterize the hard instance through an optimization problem. At a high level, the first half allows us to minimize the KL divergence of the two instances, while the second half ensures feasibility. Concretely, first, we pick a candidate pair of distinct arms $(x,x') \in \Y$ as potential best arms for the two instances. Then, we proceed to construct the parameter sequences $\{\theta_t\}_{t=1}^{T}$ and $\{\theta'_t\}_{t=1}^{T}$. For $t \leq \frac{T}{2}$ we choose $\theta_t = 0$\footnote{The choice of $\theta_t = 0$ is arbitrary, any fixed choice such that $\theta_t'-\theta_t = v$ works.} and $\theta_t' =v$, for perturbation $v$ of our choice. Then for $t > \frac{T}{2}$ we set $\theta_t = \theta_t' = \theta$ for some $\theta$ of our choice. Further, we use the noise distribution $\epsilon_t \sim \mc{N}(0,1)$ for each $t$, resulting in a KL divergence between the reward distributions of arm $x$ at time step $t$ of \begin{equation}
     \mathrm{KL}\left(\mc{N}\left(x^\top\theta_t,1\right), \mc{N}\left(x_t^\top\theta_t',1\right)\right) =\frac{(x^\top (\theta_t - \theta'_t))^2}{2}.
 \end{equation} For such a construction, applying Lemma \ref{thm:genlower}, we obtain the lower bound\begin{equation}
     \frac{1}{4} \exp \left(-\sum_{x \in \X}\sum_{t=1}^{T/2}\P_{\{\theta_t\}_{t=1}^{T}}(x_t=x)\frac{(x^\top v)^2}{2}  \right).
 \end{equation}
 An algorithm cannot depend on the future, so for any $t \leq T/2$ we have that \begin{equation}
     \P_{\{\theta_t\}_{t=1}^{T}}(x_t=x) =\P_{\{\theta_t\}_{t=1}^{T/2}}(x_t=x).
 \end{equation}Define $\lambda \in \triangle_{\X}$ such that for each $x \in \X$
 \begin{equation}
     \lambda_x := \frac{2\sum_{t=1}^{T/2}\P_{\{\theta_t\}_{t=1}^{T/2}}(x_t = x)}{T}.
 \end{equation} Then, with some algebra, we can show that \begin{equation}
     \sum_{x \in \X}\sum_{t=1}^{T/2}\P_{\{\theta_t\}_{t=1}^{T/2}}(x_t = x) \; \frac{(x^\top v)^2}{2} =\frac{T}{4} v^\top A(\lambda)v,
 \end{equation}yielding the objective in $f\left(\X, \Delta_{(1)}\right)$. The choice of $\theta$ in the second half enforces feasibility with respect to $(x,x')$, giving rise to the constraints of $f\left(\X, \Delta_{(1)}\right)$. Crucially, since $v$, $\theta$, and $(x, x')$ are independent of $\{\theta_t\}_{t=1}^{T/2}$, we may choose them freely without affecting $\lambda$. To obtain the hardest feasible instance, we minimize over arm pairs $(x, x')$, perturbations $v$, and parameters $\theta$ to make the KL divergence as small as possible. Finally, maximizing over $\lambda$ ensures the lower bound holds for any algorithm, concluding the proof of Lemma \ref{thm:optlower}.
\paragraph{Second step.}We bound the previous optimization problem by solving the following relaxed version of the optimization problem introduced in Lemma \ref{thm:optlower}:
\begin{equation}\label{eq:optimizationrelaxed}\widehat{f}\left(\X, \Delta_{(1)}\right) := \begin{array}{rl}
        \max_{\lambda\in\triangle_{\X}}\min_{(x, x')\in\mc{I}}\min_{\theta, v\in\R^d} & v^\top A(\lambda)v \\
        \mbox{\rm subject to} & (x - y)^\top\theta\geq \Delta_{(1)} \quad\forall~y \in \V \setminus \{x\}\\
    & (x' - y)^\top(\theta + v)\geq \Delta_{(1)}\quad\forall~ y \in \V \setminus \{x'\}
    \end{array}.\end{equation}
This is a relaxation because $\widehat{f}\left(\X, \Delta_{(1)}\right)$ minimizes only over adjacent pairs $(x, x')\in\mc{I}$ rather than all extreme point pairs $(x, x')\in\mc{Y}$. Since, $\I \subseteq \Y$, and hence $\widehat{f}\left(\X, \Delta_{(1)}\right) \geq f\left(\X, \Delta_{(1)}\right)$, we can substitute $f$ by $\widehat{f}$ in the bound corresponding to Lemma \ref{thm:optlower}. Requiring $(x,x')$ to be adjacent is crucial: it enables a closed-form solution of the inner $(\theta,v)$ problem, described by the following lemma.
\begin{lemma} \label{lem:equality}
Let $\X \subset \mathbb{R}^d$ and $\Delta_{(1)} > 0$. Consider $\lambda \in \triangle_{\X}$ with $A(\lambda)$ full rank and $(x,x') \in \mc{I}$. We have:
\begin{equation}\begin{array}{rl}
    \min_{\theta, v\in\R^d} & v^\top A(\lambda)v \\
   \mbox{\rm subject to}& (x - y)^\top\theta\geq \Delta_{(1)} \quad\forall~y \in \V \setminus \{x\}\\
    & (x' - y)^\top(\theta + v)\geq\Delta_{(1)}\quad\forall~ y \in \V \setminus \{x'\}
\end{array} = \frac{4\Delta_{(1)}^2}{\|x - x'\|^2_{A(\lambda)^{-1}}}.\end{equation}
\end{lemma}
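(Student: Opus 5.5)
We prove the equality by establishing matching lower and upper bounds on the inner minimum, writing $A = A(\lambda)$ (invertible by assumption) and $u := x - x'$.

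\textbf{Lower bound.} Fix any feasible pair $(\theta,v)$. Take $y = x'$ in the first constraint family and $y = x$ in the second; both are legitimate because $x,x'\in\V$ are distinct. This gives
\begin{equation*}
(x-x')^\top\theta \ge \Delta_{(1)} \quad\text{and}\quad (x'-x)^\top(\theta+v)\ge\Delta_{(1)}.
\end{equation*}
Adding the two inequalities yields $-u^\top v \ge 2\Delta_{(1)}$, hence $|u^\top v|\ge 2\Delta_{(1)}$. By Cauchy--Schwarz in the $A$-geometry,
\begin{equation*}
|u^\top v| = |(A^{-1/2}u)^\top (A^{1/2}v)| \le \|u\|_{A^{-1}}\|v\|_{A},
\end{equation*}
so $v^\top A v \ge 4\Delta_{(1)}^2/\|u\|_{A^{-1}}^2$. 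This direction uses only the two constraints involving the pair itself and does not need adjacency.

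\textbf{Upper bound.} We exhibit a feasible pair attaining the value. The Cauchy--Schwarz equality case suggests setting
\begin{equation*}
v = -\frac{2\Delta_{(1)}}{\|u\|^2_{A^{-1}}}\,A^{-1}u,
\end{equation*}
which satisfies $u^\top v = -2\Delta_{(1)}$ and $v^\top A v = 4\Delta_{(1)}^2/\|u\|^2_{A^{-1}}$. It remains to choose $\theta$ so that $x$ beats every other vertex by at least $\Delta_{(1)}$ under $\theta$, and $x'$ beats every other vertex by at least $\Delta_{(1)}$ under $\theta+v$. This is where adjacency enters.

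By \eqref{eq:adjacentdef}, there is $w$ with $\{x,x'\}=\argmax_{y\in\V} y^\top w$. Since $\V$ is finite, there is a margin $\gamma>0$ such that $x^\top w - y^\top w\ge\gamma$ for all $y\in\V\setminus\{x,x'\}$. Set
\begin{equation*}
\theta = Mw + \alpha\,\xi
\end{equation*}
with $M$ large. The correction $\alpha\xi$ is chosen so that $\theta$ and $\theta+v$ put the required gaps on the edge itself: $u^\top\theta = \Delta_{(1)}$, and then automatically $u^\top(\theta+v) = -\Delta_{(1)}$. Since $u^\top w = 0$, this reduces to $\alpha\,u^\top\xi = \Delta_{(1)}$, which we solve with $\xi = u$ and $\alpha = \Delta_{(1)}/\|u\|^2$; this is legitimate because $u\neq 0$. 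For every $y\in\V\setminus\{x,x'\}$, the gaps $(x-y)^\top\theta$ and $(x'-y)^\top(\theta+v)$ equal $M(x-y)^\top w$ plus a quantity independent of $M$. Since $(x-y)^\top w = (x'-y)^\top w \ge \gamma$, taking $M$ large makes both gaps at least $\Delta_{(1)}$. The constraints for $y=x'$ and $y=x$ hold with equality by construction. The pair is therefore feasible and attains the lower bound, which proves the equality; the minimum is attained.

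\textbf{Main obstacle.} The delicate step is the upper bound. The two instances must share the same second-half $\theta$, and only $v$ is charged in the objective, so the pair $(x,x')$ has to be separable from all other vertices simultaneously by a single direction with an exact tie between them. That is precisely the adjacency condition \eqref{eq:adjacentdef}. For non-adjacent pairs, Lemma \ref{lem:adjacency_optimal} shows that such a construction cannot work.
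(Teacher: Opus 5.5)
Your proof is correct and follows essentially the same route as the paper. The paper also gets the lower bound from Cauchy--Schwarz after adding the $y=x'$ and $y=x$ constraints, and gets the upper bound from the same $v=-\frac{2\Delta_{(1)}}{\|x-x'\|^2_{A^{-1}}}A^{-1}(x-x')$ with $\theta$ equal to a fixed vector plus a large multiple of the adjacency direction $w$. The only difference is cosmetic: the paper's fixed part of $\theta$ is $\frac{\Delta_{(1)}}{\|x-x'\|^2_{A^{-1}}}A^{-1}(x-x')$ rather than your $\frac{\Delta_{(1)}}{\|x-x'\|_2^2}(x-x')$, and either works since only $(x-x')^\top\theta=\Delta_{(1)}$ is needed.
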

The ideas behind the proof of Lemma \ref{lem:equality} are as follows. Denote $p^* :=\frac{4\Delta_{(1)}^2}{\|x - x'\|^2_{A(\lambda)^{-1}}}.$ We first lower bound the optimization problem by $p^*$ with a simple Cauchy-Schwarz argument. Then, to upper bound the optimization problem by $p^*$, we use the key fact that for adjacent $(x, x')$, by Eq. \eqref{eq:adjacentdef}, there exists some $w \in \R^d$ such that \begin{equation}
    \{x, x'\} = \argmax_{y \in \V} y^\top w.
\end{equation} Thus, there is some $\varepsilon > 0$ such that $x^\top w = x'^\top w \geq y^\top w + \varepsilon$ for all $y \in \V\setminus \{x,x'\}$. We choose $v,u$ such that $v^\top A(\lambda)^{-1}v = p^*$ and
\begin{equation}
    (x-x')^\top u = (x'-x)^\top (u +v)= \Delta_{(1)}.
\end{equation}
Then we pick $\theta = u + \alpha w$, for some $\alpha$ to be chosen. Note that since $x^\top w = x'^\top w$, we have \begin{equation}
    (x-x')^\top\theta = (x-x')^\top u,
\end{equation}and thus \begin{equation}
    (x-x')^\top \theta = (x'-x)^\top (\theta +v)= \Delta_{(1)},
\end{equation}
regardless of the value of $\alpha$, so this constraint is always satisfied. For any $y \in \V\setminus \{x,x'\}$ we have 
\begin{equation}
    (x-y)^\top\theta \geq (x-y)^\top u + \alpha\varepsilon,
\end{equation}
and \begin{equation}
    (x'-y)^\top(\theta + v) \geq (x-y)^\top (u + v) + \alpha\varepsilon.
\end{equation}
Hence, we can increase $\alpha$ until all other constraints are satisfied, thus $(\theta,v)$ is feasible, concluding the proof of Lemma \ref{lem:equality}. Combining Lemmas \ref{thm:optlower} and \ref{lem:equality}, we obtain Theorem \ref{thm:finallower}.

\section{Matching Upper Bound}
\label{sec:algorithm}

In this section, we first introduce the Adjacent-optimal design, which is a modified version of the well-known $\mc{XY}$-optimal design. Based on this new design, we then develop the algorithm \algoname, and show that it achieves an error probability guarantee that matches the lower bound presented in Section \ref{sec:lower_bound}, verifying the tightness of our lower bound, and establishing the arm-set-dependent complexity of this setting.

\subsection{The Adjacent-Optimal Design}

A central quantity in BAI problems is the $\mathcal{XY}$-optimal design \citep{soare2014best}, which aims to uniformly minimize the variance of predictions in directions defined by the pairwise differences between all arms. Formally, the $\mathcal{XY}$-optimal design is defined as
\begin{equation}\label{eq:xy-optimal}
    \lambda^{\mc{XY}} := \argmin_{\lambda \in \triangle_{\X}} \max_{(x, x') \in \mc{Y}}\|x-x'\|^2_{A(\lambda)^{-1}}.\end{equation}
The relevance of the $\mathcal{XY}$-optimal design stems from the ranking nature of BAI; identifying the best arm only requires accurate estimates of the relative ordering between arms. Motivated by Lemma \ref{lem:adjacency_optimal}, we refine this idea by introducing the \textit{Adjacent-optimal design}, which uniformly minimizes the variance over differences only between adjacent arms. Formally, it is defined as
\begin{equation}\label{eq:adjacent-optimal}\lambda^{\mathrm{Adjacent}}:= \argmin_{\lambda \in \triangle_{\X}} \max_{(x, x') \in \mc{I}}\|x-x'\|^2_{A(\lambda)^{-1}}.\end{equation}

Lemma \ref{lem:adjacency_optimal} implies that if an arm is better than all its adjacent arms, then it must be the optimal arm. Thus, when identifying the best arm, only accurate comparisons between adjacent arms are needed. The Adjacent-optimal design embodies this principle: reducing the variance of estimation of the relative ordering between adjacent arms is sufficient for identifying the best arm. Focusing on fewer, more informative directions leads to stronger estimation, which we make use of in the algorithm presented in the next section.

\subsection{Adjacent-Optimal Best-Arm Identification (Adjacent-BAI)}
We now present the algorithm \algoname. In the first step, we compute the adjacent set $\mc{I}$, for which we provide a polynomial-time procedure in Appendix \ref{appendixadjacent}. We then compute the Adjacent-optimal design $\lambda^*$; however, we do not proceed by sampling directly from it. In order to obtain an optimal error rate using random sampling we would require an estimator that (i) admits sub-Gaussian, variance-only error and (ii) does not require a prespecified confidence level\footnote{This requirement is unnecessary in the fixed-confidence setting, hence the use of Catoni's estimator in \citet{pmlr-v139-camilleri21a}.}. According to \cite{devroye2016sub}, this is impossible without stronger distributional assumptions. Instead, we employ the classical rounding procedure of \citet{10.5555/1137748} to obtain a static allocation $\Bp{x_t}_{t=1}^{T}$ such that the empirical design matrix $\frac{1}{T}\sum_{t=1}^{T}x_tx_t^\top$ approximates the optimal design matrix $A(\lambda^*) =\sum_{x\in\X} \lambda^*_{x} xx^\top$. Specifically, the rounding procedure guarantees that if $T \geq d^2$,\footnote{We note the existence of a rounding procedure requiring only $T \geq \Omega(d)$ \citep{pmlr-v70-allen-zhu17e} at the cost of significantly looser constants. An insightful discussion on both rounding procedures is provided in Appendix B of \citet{NEURIPS2019_8ba6c657}.} the following holds:
\begin{equation}\label{eq:roundingerror}
    \max_{(x,x')\in \mc{I}}\|x - x'\|_{\left(\frac{1}{T}\sum_{t=1}^Tx_{t}x_{t}^\top\right)^{-1}}^2 \leq 2 \cdot \max_{(x,x')\in \mc{I}}\|x - x'\|_{A(\lambda^{*})^{-1}}^2.\end{equation}
 Then, to ensure our estimator is unbiased, we inject the necessary randomness by playing the allocation in a uniformly random order. Finally, we compute the least-squares estimator $\widehat{\theta}_T$ and output its corresponding best arm. The complete procedure is summarized in Algorithm \ref{algo:adjacent-bai}.\begin{algorithm}[htbp]
    \caption{Adjacent--BAI}
    \label{algo:adjacent-bai}
    \begin{algorithmic}[1]
        \STATE \textbf{Input:} budget $T\in\mathbb{N}$; arm set $\mathcal{X}\subset\R^d$
        \STATE Find $\mc{I}$, the adjacent pairs of $\mc{X}$
        \STATE $\lambda^* \leftarrow \argmin_{\lambda \in \triangle_{\X}} \max_{(x, x') \in \mc{I}}\|x-x'\|^2_{A(\lambda)^{-1}}$
        \STATE $\{x_{t}\}_{t=1}^{T} \leftarrow  \text{Round}(\lambda^*, T)$
        \STATE Sample permutation $\pi \sim \mathrm{Unif(}\Pi(T))$
        \FOR{$t=1, 2, \dots, T$}
            \STATE Play $x_{\pi(t)}$ and receive reward $r_t$
        \ENDFOR
        \STATE $\widehat{\theta}_T\leftarrow \left(\sum_{t=1}^Tx_{t}x_{t}^\top\right)^{-1}\sum_{t=1}^Tx_{\pi(t)} r_t$ 
        \STATE \textbf{return} $\widehat{x} =\argmax_{x\in\X}x^\top\widehat{\theta}_T$
    \end{algorithmic}
\end{algorithm} We characterize the error probability of Algorithm \ref{algo:adjacent-bai} in the following theorem.
\begin{theorem}[Error probability of \textsf{Adjacent-BAI}]
    \label{thm:upper_bound1}
    Fix time horizon $T \geq d^2$. Consider arm set $\mathcal{X}\subset\R^d$, and arbitrary unknown parameter sequence $\Bp{\theta_t}_{t=1}^{T}$ with min-gap $\Delta_{(1)} > 0$. Assume $\max_{x\in \X}\|x\|_2 \leq 1$ and $\max_{t \in[T]} \|\theta_t\|_2 \leq 1$. If we run Algorithm \ref{algo:adjacent-bai} in this setting and obtain $\widehat{x}$, then it holds that
    \begin{equation} \P\Sp{\widehat{x}\neq x_*}\leq \left|\I^{x_*}\right| \cdot \exp\Sp{-\frac{T}{36 \cdot H_{\mathrm{Adjacent}}\left(\X, \Delta_{(1)}\right)}},\end{equation}where $ H_{\mathrm{Adjacent}}\left(\X, \Delta_{(1)}\right)$ is given by Eq. \eqref{eq:hadj-def}.
\end{theorem}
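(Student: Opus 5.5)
Via Lemma \ref{lem:adjacency_optimal}, the error event forces some adjacent arm to beat $x_*$ under $\widehat{\theta}_T$, since $x_*\in\V$ and $\widehat{x}\neq x_*$ means some $y$ has $(y-x_*)^\top\widehat{\theta}_T>0$ (ignoring ties, which have probability zero or are absorbed in the bound). Hence
\begin{equation*}
\P\Sp{\widehat{x}\neq x_*}\le \sum_{z\in\I^{x_*}} \P\Sp{(z-x_*)^\top\widehat{\theta}_T \ge 0},
\end{equation*}
and it suffices to show that, for each fixed adjacent $z$, this probability is at most $\exp\Sp{-T\Delta_{(1)}^2/(36\,\max_{(x,x')\in\I}\|x-x'\|^2_{A(\lambda^*)^{-1}})}$. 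Since $(x_*-z)^\top\otheta_T\ge\Delta_{(1)}$, it is enough to bound $\P\Sp{(z-x_*)^\top(\widehat{\theta}_T-\otheta_T)\ge \Delta_{(1)}}$.

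Write $V=\sum_t x_tx_t^\top$, let $s_t=x_{\pi(t)}$ be the arm played at round $t$, and set $a=V^{-1}(z-x_*)$. Then
\begin{equation*}
(z-x_*)^\top\widehat{\theta}_T=\sum_t a^\top s_t s_t^\top\theta_t+\sum_t a^\top s_t\epsilon_t .
\end{equation*}
Unbiasedness comes from the random permutation: $s_t$ is uniform over the multiset $\{x_j\}$, so $\E[s_ts_t^\top]=V/T$ and the mean of the first term is $\sum_t a^\top (V/T)\theta_t=(z-x_*)^\top\otheta_T$.

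I will treat the two sources of deviation separately, each at threshold $\Delta_{(1)}/2$. The noise term, conditional on $\pi$, is sub-Gaussian with variance proxy $\sum_t (a^\top s_t)^2=a^\top V a=\|z-x_*\|^2_{V^{-1}}$. By Eq. \eqref{eq:roundingerror} this is at most $2\max_{\I}\|x-x'\|^2_{A(\lambda^*)^{-1}}/T$, which gives a tail of $\exp\Sp{-T\Delta_{(1)}^2/(16\max_\I\|\cdot\|^2_{A(\lambda^*)^{-1}})}$.

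The permutation term $\sum_t a^\top s_ts_t^\top\theta_t$ is a linear statistic of a uniformly random permutation, namely $\sum_t c_{\pi(t),t}$ with $c_{j,t}=a^\top x_jx_j^\top\theta_t$. I plan to use a concentration inequality for sampling without replacement, or for random permutation statistics; Hoeffding/Serfling-type or Chatterjee's permutation concentration would serve. Using $|x_j^\top\theta_t|\le 1$, the proxy is bounded by $\sum_j (a^\top x_j)^2=a^\top Va$, which is the same quantity as before.

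The main obstacle is this permutation concentration with a variance-type proxy $\sum_j (a^\top x_j)^2$ rather than a range-squared proxy. I would try a martingale (Doob) decomposition over revealed positions, which controls increments by $\max_j|a^\top x_j|$, or alternatively Chatterjee's exchangeable-pair bound, which gives proxy $\sim\frac{1}{T}\sum_{j,t}c_{j,t}^2\le a^\top Va$. Once both terms are controlled, a union bound yields the $\exp\Sp{-T/(c\,H_{\adj})}$ tail for each $z$, with constants that should combine to at most $36$. Summing over $z\in\I^{x_*}$ then gives the factor $|\I^{x_*}|$.
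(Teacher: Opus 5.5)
\textbf{Gap: the concentration of the permutation term is never proved.} Your skeleton matches the paper's: Lemma \ref{lem:adjacency_optimal}, a union bound over $\I^{x_*}$, splitting $(z-x_*)^\top(\htheta_T-\otheta_T)$ into a permutation term and a noise term, conditional sub-Gaussianity of the noise given $\pi$, and the rounding bound. The one nontrivial step is left open: you need the permutation term to be sub-Gaussian with proxy of order $\|z-x_*\|^2_{V^{-1}}=\sum_j(a^\top x_j)^2$. Neither tool you name delivers this.
\begin{itemize}
\item A Doob martingale whose increments are bounded uniformly by a multiple of $\max_j|a^\top x_j|$ gives, via Azuma, a proxy of order $T\max_j(a^\top x_j)^2$. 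That is exactly the range-type proxy you said you must avoid, and it is not controlled by $\sum_j(a^\top x_j)^2$.
\item Chatterjee's permutation inequality is of Bernstein type: it has a $\max_{j,t}|c_{j,t}|\cdot u$ term in the denominator, and in its standard form it requires $c_{j,t}\ge 0$. No pure sub-Gaussian bound with proxy $\frac1T\sum_{j,t}c_{j,t}^2$ can hold in general. If $c_{1,1}$ is the only nonzero entry, the statistic exceeds its mean by $c_{1,1}(1-\frac1T)$ with probability $\frac1T$, which is far above $e^{-(T-1)^2/(2T)}$.
\end{itemize}

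\textbf{The missing idea: build the martingale on the allocation side of the permutation.} Index by the slots $j=1,\dots,T$ of the fixed rounded allocation, and reveal one slot at a time the round $\pi^{-1}(j)$ at which $x_j$ is played. Write the permutation term as $\sum_j a_j^\top\tilde\theta_{\pi^{-1}(j)}$, with $a_j=x_jx_j^\top V^{-1}(z-x_*)$ and $\tilde\theta_t=\theta_t-\otheta_T$.
\begin{itemize}
\item The $j$th increment is $d_j^\top(\tilde\theta_{\pi^{-1}(j)}-\mu_j)$. Here $\mu_j$ is the mean of the not-yet-assigned $\tilde\theta$'s, and $d_j=a_j-\frac{1}{T-j}\sum_{s>j}a_s$ is deterministic.
\item Hence the increment lies in $[-2\|d_j\|_2,2\|d_j\|_2]$, and Hoeffding's lemma applies with a deterministic range.
\item The map $(a_1,\dots,a_T)\mapsto(d_1,\dots,d_{T-1})$ is right-multiplication by a matrix whose columns are mutually orthogonal with squared norms $1+\frac{1}{T-j}\le 2$. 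So $\sum_j\|d_j\|_2^2\le 2\sum_j\|a_j\|_2^2\le 2\|z-x_*\|^2_{V^{-1}}$.
\item This gives proxy $8\|z-x_*\|^2_{V^{-1}}$.
\end{itemize}

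\textbf{Do not split the threshold.} With this proxy, your $\Delta_{(1)}/2$ split yields $2|\I^{x_*}|\exp(-T/(128H_{\adj}))$. Both the prefactor and the exponent then miss the statement. Even with proxy $1$ you would carry a prefactor $2$, which cannot be absorbed when $|\I^{x_*}|=1$ and $T/H_{\adj}$ is moderate. Instead, use that the noise term is conditionally $\|z-x_*\|_{V^{-1}}$-sub-Gaussian given $\pi$ and the permutation term is $\pi$-measurable. Multiplying the MGFs inside $\E[\cdot\mid\pi]$ gives proxy $9\|z-x_*\|^2_{V^{-1}}$. A single Chernoff bound then gives $\exp(-\Delta_{(1)}^2/(18\|z-x_*\|^2_{V^{-1}}))$, and the rounding bound turns the $18$ into the stated $36$.
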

The proof of Theorem \ref{thm:upper_bound1} is deferred to Appendix \ref{appendixupper}, of which we include a sketch in the next subsection. 
Notably, the upper bound matches the lower bound presented in Theorem \ref{thm:finallower} up to constants, verifying the tightness of our lower bound, and establishing $H_{\mathrm{Adjacent}}$ as the arm-set-dependent complexity of this setting.

\subsection{Proof Sketch of Theorem \ref{thm:upper_bound1}}
We outline a proof sketch of Theorem \ref{thm:upper_bound1}.
\paragraph{First step.} The statistical analysis of Theorem \ref{thm:upper_bound1} relies primarily on the following lemma.\begin{lemma}[Sub-Gaussian error of least-squares estimator]
    \label{lem:sub-Gaussian2}
    Let $\htheta_T$ be the estimator computed in Step 9 of Algorithm~\ref{algo:adjacent-bai}. For any $z \in \R^d$, we have that $z^\top(\htheta_T -\otheta_T)$ is $3\cdot\|z\|_{\left(\sum_{t=1}^Tx_tx_t^\top\right)^{-1}}$-sub-Gaussian.
\end{lemma}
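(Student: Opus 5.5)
The plan is to write $V:=\sum_{t=1}^T x_tx_t^\top$ and $w:=V^{-1}z$. We will split the error into a noise part and a permutation part, show they are $1\cdot\|z\|_{V^{-1}}$- and $2\cdot\|z\|_{V^{-1}}$-sub-Gaussian respectively, and add the two parameters to get $3$. Since $\{x_{\pi(t)}\}_{t}$ is a rearrangement of $\{x_t\}_t$, we have $\sum_t x_{\pi(t)}x_{\pi(t)}^\top=V$. Plugging $r_t=x_{\pi(t)}^\top\theta_t+\epsilon_t$ into Step 9 gives
\begin{equation*}
z^\top(\htheta_T-\otheta_T)=\underbrace{\sum_{t=1}^T w^\top x_{\pi(t)}\,\epsilon_t}_{N}+\underbrace{\sum_{t=1}^T w^\top x_{\pi(t)}x_{\pi(t)}^\top\theta_t-z^\top\otheta_T}_{B}.
\end{equation*}
Adding the parameters requires no independence: for any two sub-Gaussian variables, H\"older's inequality with exponents $p=(\sigma_1+\sigma_2)/\sigma_1$ and $q=(\sigma_1+\sigma_2)/\sigma_2$ shows $N+B$ is $(\sigma_1+\sigma_2)$-sub-Gaussian. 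This is where the constant $3=1+2$ comes from.

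\textbf{Noise term.} Conditional on $\pi$, $N$ is a weighted sum of independent $1$-sub-Gaussian variables. Its variance proxy is $\sum_t (w^\top x_{\pi(t)})^2=w^\top V w=\|z\|^2_{V^{-1}}$, which does not depend on $\pi$. So $N$ is unconditionally $\|z\|_{V^{-1}}$-sub-Gaussian with mean zero.

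\textbf{Permutation term.} Set $c_{s,t}:=(w^\top x_s)(x_s^\top\theta_t)$. Then $B=\sum_t c_{\pi(t),t}-z^\top\otheta_T$, a combinatorial (Hoeffding-type) permutation statistic.
\begin{itemize}
\item Centering: since $\pi$ is uniform, $\mathbb{E}[x_{\pi(t)}x_{\pi(t)}^\top]=V/T$ for each $t$. Hence $\mathbb{E}B=\sum_t w^\top (V/T)\theta_t-z^\top\otheta_T=0$, so $B$ is centered (this is why the algorithm randomizes the order).
\item Size of entries: with $a_s:=w^\top x_s$, the norm assumptions $\|x\|_2,\|\theta_t\|_2\le1$ give $|c_{s,t}|\le|a_s|$. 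Moreover $\sum_s a_s^2=w^\top V w=\|z\|^2_{V^{-1}}$.
\end{itemize}
So it suffices to show that $\sum_t c_{\pi(t),t}$ concentrates at scale $\big(\sum_s a_s^2\big)^{1/2}$, with constant $2$, whenever $|c_{s,t}|\le |a_s|$.

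\textbf{Main obstacle: the permutation bound.} I would reveal $\pi(1),\pi(2),\dots$ sequentially and form the Doob martingale $M_k=\mathbb{E}[B\mid \pi(1),\dots,\pi(k)]$. Its increment at step $k$ is $c_{\pi(k),k}$ minus the conditional average over the remaining indices, plus a correction for the remaining future terms. I would try to bound this increment in magnitude by roughly $|a_{\pi(k)}|$ plus a term averaging $|a_s|$ over the remaining indices. The key feature is that $\sum_k a_{\pi(k)}^2=\sum_s a_s^2$ is deterministic. This suggests a self-normalized exponential supermartingale, in the spirit of Azuma's inequality with bound $|D_k|\le |a_{\pi(k)}|+\bar a_k$, yielding a proxy of order $2\|z\|_{V^{-1}}$. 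The increment bounds are only realized step by step rather than known in advance, which is what I expect to be delicate. If this direct route fails, my fallback is Chatterjee's exchangeable-pairs method for $\sum_t c_{\pi(t),t}$, using the transposition pair $\pi\mapsto\pi\circ(i\,j)$. The conditional variance term there is controlled by $\frac{1}{T}\sum_{i,j}(|a_{\pi(i)}|+|a_{\pi(j)}|)^2$-type quantities, which reduce to $O\big(\sum_s a_s^2\big)$; the remaining risk is tracking the constant to reach $2$ and hence $3$ overall.
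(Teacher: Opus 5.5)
Your decomposition $N+B$ is the paper's $\eta+S$ after re-indexing, and your treatment of $N$ is fine. The real content of the lemma, a $T$-free sub-Gaussian bound for the permutation term $B$, is left unproven, and your primary route does not work as set up. If you reveal $\pi(1),\pi(2),\dots$ (the arm played at time $k$), the Doob increment is $D_k=(u_{\pi(k)}-\bar u_k)^\top\bigl(\theta_k-\frac{1}{T-k}\sum_{t>k}\theta_t\bigr)$. Here $u_s:=(w^\top x_s)\,x_s$ and $\bar u_k$ is the average of the still-unrevealed $u_s$. Its conditional range given the past is governed by $\max|a_s|$ over unrevealed $s$, which is not summable. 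If a single slot carries all the weight ($a_{s_0}=1$, all other $a_s=0$), the predictable range can be of order one at every step until $s_0$ appears, so Azuma yields a proxy of order $T$. A bound like $|a_{\pi(k)}|$, known only after step $k$, cannot be fed into Hoeffding's lemma.

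The missing idea, which is exactly the paper's, is to reveal the permutation the other way. Keep the deterministic slot order $t=1,\dots,T$ of the rounded allocation and reveal $\pi^{-1}(1),\pi^{-1}(2),\dots$, i.e.\ which $\theta$ is assigned to slot $t$. The increment is then $d_t^\top(\theta_{\pi^{-1}(t)}-\mu_t)$, where $\mu_t$ is the average of the not-yet-assigned $\theta$'s and $d_t=u_t-\frac{1}{T-t}\sum_{s>t}u_s$ is \emph{deterministic}. So $|\Delta_t|\le 2\|d_t\|_2$ is a legitimate Azuma bound. Moreover $\sum_t\|d_t\|_2^2\le 2\sum_t\|u_t\|_2^2\le 2\|z\|^2_{V^{-1}}$, because $[d_1,\dots,d_{T-1}]=[u_1,\dots,u_T]\,D$ for a matrix $D$ whose columns are orthogonal with squared norm $1+\frac{1}{T-t}\le 2$. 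The uniformly bounded objects (the $\theta_t$) should be the random ones, while the weights, which are only $\ell_2$-summable, stay fixed.

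Two further remarks. First, your H\"older combination is valid but lossy: with the paper's $\sqrt 8$ for the permutation term it would give only $1+\sqrt8>3$. Since $B$ is $\pi$-measurable and $N$ is conditionally $\|z\|_{V^{-1}}$-sub-Gaussian given $\pi$ with a $\pi$-free proxy, $\E e^{\lambda(N+B)}\le e^{\lambda^2\|z\|^2_{V^{-1}}/2}\,\E e^{\lambda B}$. So variance proxies add, $\sqrt{1+8}=3$, and you never need the constant $2$.

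Second, your exchangeable-pairs fallback does close if you actually carry it out. With $\pi'=\pi\circ(I\,J)$ for a uniform transposition, one checks $\E[B(\pi)-B(\pi')\mid\pi]=\frac{2}{T-1}B(\pi)$. Hence $F(\pi,\pi')=\frac{T-1}{2}(B(\pi)-B(\pi'))$ satisfies $\E[F\mid\pi]=B$, and $|B(\pi)-B(\pi\circ(i\,j))|\le 2|a_{\pi(i)}|+2|a_{\pi(j)}|$ gives
\begin{equation*}
\frac{T-1}{4}\,\E\bigl[(B(\pi)-B(\pi'))^2\mid\pi\bigr]\le\frac{T-1}{4}\cdot\frac{16}{T}\sum_s a_s^2\le 4\|z\|^2_{V^{-1}}.
\end{equation*}
Chatterjee's theorem then gives $\E e^{\lambda B}\le e^{2\lambda^2\|z\|^2_{V^{-1}}}$, which is exactly the constant $2$. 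That would be a genuinely different, and slightly sharper, proof than the paper's martingale. As written, however, the proposal only asserts an $O(\sum_s a_s^2)$ bound and leaves the constant as an open risk, so the key step is missing.
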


To obtain Lemma \ref{lem:sub-Gaussian2}, because the arm choices $x_{\pi(t)}$ are dependent, the analysis requires particular care. Denote $A := \sum_{t=1}^Tx_tx_t^\top$. We begin by expanding the quantity of interest:
\begin{equation}
    z^\top(\htheta_T -\otheta_T) =\underbrace{z^\top\sum_{t=1}^TA^{-1}x_tx_t^\top(\theta_{\pi^{-1}(t)} - \otheta_{T})}_{:=S} + \underbrace{z^\top\sum_{t=1}^TA^{-1}x_{t}\epsilon_{\pi^{-1}(t)}}_{:=\eta},
\end{equation}
where we note that $\pi^{-1} \sim \mathrm{Unif}(\Pi(T))$. Since each $\epsilon_{\pi^{-1}(t)}$ is conditionally independent and 1-sub-Gaussian given $\pi^{-1}$, it is straightforward to show that $\eta$ is conditionally $\|z\|_{A^{-1}}$-sub-Gaussian given $\pi^{-1}$. Analyzing $S$ requires closer attention, since each $\theta_{\pi^{-1}(t)}$ is dependent, so we proceed with a martingale analysis. We construct the Doob martingale of $S$, $\{Z_t\}_{t=0}^{T}$, with $Z_t = \E[S \mid \mc{F}_t]$, where $\mc{F}_{t} = \sigma(\pi^{-1}(1), \dots, \pi^{-1}(t))$. The key observation is that, intuitively, each martingale difference $|Z_t - Z_{t-1}|$ behaves roughly on the order of $\left|z^\top A^{-1}x_tx_t^\top\theta_{\pi^{-1}(t)}\right|$. Given this, an Azuma inequality-style argument shows that \begin{equation}\E\left[\exp\left(\lambda S\right)\right] \leq \exp\left(\frac{\lambda^2}{2} \cdot  \sum_{t=1}^T8  \left\|z^\top A^{-1}x_tx_t^\top\right\|_2^2\right).
\end{equation}
Observing that 
\begin{equation}
\sum_{t=1}^{T}\left\|z^\top A^{-1}x_tx_t^\top\right\|_2^2 \leq  \|z\|_{A^{-1}}^2,
\end{equation}
we obtain that $S$ is $\sqrt{8}\|z\|_{A^{-1}}$-sub-Gaussian, completing the proof of Lemma \ref{lem:sub-Gaussian2}.
\paragraph{Second step.}
Finally, we combine Lemmas \ref{lem:adjacency_optimal} and \ref{lem:sub-Gaussian2} to obtain the final bound in Theorem \ref{thm:upper_bound1}. We aim to bound the error probability \begin{equation}
    \P\Sp{\widehat{x}\neq x_*}=\P\Sp{\exists x 
        \in \X  \text{ s.t. }x^\top\htheta_T> x_*^\top\htheta_T}.
\end{equation}
However, critically, by Lemma \ref{lem:adjacency_optimal}, we have \begin{equation}
    \P\Sp{\exists x 
        \in \X  \text{ s.t. }x^\top\htheta_T> x_*^\top\htheta_T} = \P\Sp{\exists x\in \I^{x_*}\text{ s.t. }x^\top\htheta_T>x_*^\top\htheta_T}.
\end{equation}
Applying the union bound, we have 
\begin{equation}
    \P\Sp{\exists x\in \I^{x_*}\text{ s.t. }x^\top\htheta_T >x_*^\top\htheta_T}\leq \sum_{x \in \I^{x_*}}\P\Sp{(x-x_*)^\top\left(\htheta_T -\otheta_T\right)>\Delta_{(1)}}
\end{equation}
Fix some $x \in \I^{x_*}$. Combining Lemma \ref{lem:sub-Gaussian2} and Hoeffding's inequality we have
\begin{equation}
   \P\Sp{(x-x_*)^\top\left(\htheta_T -\otheta_T\right)>\Delta_{(1)}} \leq \exp\left(- \frac{\Delta_{(1)}^2}{2\cdot3^2\cdot\| x-x_*\|_{\left(\sum_{t=1}^Tx_tx_t^\top\right)^{-1}}^2}\right).\end{equation}
Using that $(x,x_*) \in \I$, and incorporating the rounding error from Eq. \eqref{eq:roundingerror}, we have \begin{equation}
    \|x-x_*\|_{\left(\sum_{t=1}^Tx_tx_t^\top\right)^{-1}}^2 \leq \frac{2 \cdot\max_{(x, x') \in \mc{I}}\|x - x'\|_{A(\lambda^{*})^{-1}}^2}{T}.
\end{equation}
Observing that, by definition of $\lambda^*$
\begin{equation}
    \max_{(x, x') \in \mc{I}}\|x - x'\|_{A(\lambda^{*})^{-1}}^2 = \min_{\lambda \in \triangle_{\X}}\max_{(x, x') \in \mc{I}}\|x - x'\|_{A(\lambda)^{-1}}^2,
\end{equation}
completes the proof of Theorem \ref{thm:upper_bound1}.

\section{Future Work} \label{sec:futurework}
The most promising future direction is investigating whether adjacency can be leveraged to establish a stronger notion of complexity for the stationary fixed-budget setting. In the stationary fixed-budget setting, currently, no lower bound exists outside of a pessimistic minimax-optimal lower bound derived from a multi-armed bandit reduction \citep{NEURIPS2022_4f9342b7}. However, in the stationary fixed-\textit{confidence} setting, it is well-known \citep{soare2014best,NEURIPS2019_8ba6c657,NEURIPS2020_7212a656} that for arm set $\X$, and parameter vector $\theta$, the instance-optimal sample complexity is of order \begin{equation}    
\min_{\lambda \in \triangle_{\X}} \max_{x \in \X \setminus \{x_*\}} \frac{\|x_* -x\|^2_{A(\lambda)^{-1}}}{((x_* - x)^\top\theta)^2}.\end{equation}
Evaluating the above quantity, we obtain the following proposition:
\begin{proposition}\label{prop:statequiv}
     Given finite set $\X \subset\R^d$ and $\theta\in\R^d$, we have \begin{equation}
         \min_{\lambda \in \triangle_{\X}} \max_{x \in \X \setminus \{x_*\}} \frac{\|x_* -x\|^2_{A(\lambda)^{-1}}}{((x_* - x)^\top\theta)^2} = \min_{\lambda \in \triangle_{\X}} \max_{x \in \mc{I}^{x_*}} \frac{\|x_* -x\|^2_{A(\lambda)^{-1}}}{((x_* - x)^\top\theta)^2}. \end{equation}
\end{proposition}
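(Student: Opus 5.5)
Since $\mc{I}^{x_*}\subseteq \X\setminus\{x_*\}$, for every fixed $\lambda$ the inner maximum on the left is at least the inner maximum on the right, so LHS $\geq$ RHS. All the work is in the reverse inequality. For that it suffices to show that, for every fixed $\lambda\in\triangle_{\X}$ with $A(\lambda)$ invertible and every $y\in\X\setminus\{x_*\}$,
\begin{equation*}
\frac{\|x_*-y\|^2_{A(\lambda)^{-1}}}{((x_*-y)^\top\theta)^2}\leq \max_{z\in\I^{x_*}}\frac{\|x_*-z\|^2_{A(\lambda)^{-1}}}{((x_*-z)^\top\theta)^2}.
\end{equation*}
If $A(\lambda)$ is singular, both sides are $+\infty$ with the usual convention, or such $\lambda$ may simply be excluded. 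Taking the max over $y$ and then the min over $\lambda$ gives LHS $\leq$ RHS.

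For the pointwise inequality I will use the cone fact \eqref{eq:cone} that underlies Lemma \ref{lem:adjacency_optimal}. Write $\I^{x_*}=\{z_1,\dots,z_m\}$ and $d_i:=x_*-z_i$. Since $y\in\conv(\X)\subseteq x_*+\mathrm{cone}\{z_i-x_*\}$, there are coefficients $\alpha_i\geq 0$ with $x_*-y=\sum_i\alpha_i d_i$. Put $g_i:=d_i^\top\theta$. Because $x_*$ is the unique best arm, it is an extreme point and each $g_i>0$, and $(x_*-y)^\top\theta=\sum_i\alpha_i g_i>0$. Let $M:=\max_i \|d_i\|_{A(\lambda)^{-1}}/g_i$, so that $\|d_i\|_{A(\lambda)^{-1}}\leq M g_i$ for every $i$. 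The triangle inequality for the norm $\|\cdot\|_{A(\lambda)^{-1}}$ then gives
\begin{equation*}
\|x_*-y\|_{A(\lambda)^{-1}}\leq \sum_i\alpha_i\|d_i\|_{A(\lambda)^{-1}}\leq M\sum_i\alpha_i g_i = M\,(x_*-y)^\top\theta .
\end{equation*}
Squaring and dividing by $((x_*-y)^\top\theta)^2$ yields exactly the required inequality. In words, the ratio $\|u\|/(u^\top\theta)$ is quasi-convex along conic combinations, so its maximum over the cone is attained on a generator.

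The step I expect to need care is justifying the conic decomposition. Equation \eqref{eq:cone} is stated for the polytope $\mc{P}=\conv(\X)$ with vertex $x_*$, so non-extreme arms $y$ are covered automatically because they lie in $\mc{P}$. One must also check that every $\alpha_i\geq 0$ with at least one positive (true since $y\neq x_*$), and that uniqueness of $x_*$ forces $g_i>0$ rather than merely $\geq 0$. Uniqueness implies $x_*$ is a vertex, and if some $g_i\leq 0$ then $z_i$ would be at least as good as $x_*$, contradicting uniqueness. These are exactly the points already settled by Lemma \ref{lem:adjacency_optimal}, which I would cite.
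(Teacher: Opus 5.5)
Your proof is correct and follows the paper's argument. One direction is the trivial inclusion $\I^{x_*}\subseteq\X\setminus\{x_*\}$; the other uses the conic decomposition $x_*-y=\sum_i\alpha_i(x_*-z_i)$ from \eqref{eq:cone} followed by a convexity bound. The paper applies Jensen's inequality to the squared norm with weights proportional to $\alpha_i(x_*-z_i)^\top\theta$, which is the same step as your triangle inequality followed by bounding each term by the maximum.
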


The proof is deferred to Appendix \ref{appendixadditional}, which consists of a combination of Eq. \eqref{eq:cone} and Jensen's inequality. Proposition \ref{prop:statequiv} shows that in the stationary fixed-\textit{confidence} setting, the instance-optimal sample complexity is determined solely by the arms adjacent to $x_*$. This suggests that adjacency is fundamentally tied to the difficulty of BAI even in stationary settings. This observation, together with what we have shown in the non-stationary setting, hints towards the possibility of establishing a stronger arm-set-dependent complexity in the stationary fixed-budget setting by exploiting the adjacent geometry of the arm set.
\section*{Acknowledgements}
The work of MF was supported in part by awards NSF CCF 2212261, NSF CCF 2312775, and the Moorthy Family Professorship at UW. KJ was supported in part by a Singapore National Research Foundation AI Visiting Professorship award. Both MF and KJ were supported in part by NSF TRIPODS II DMS-2023166.

\bibliographystyle{plainnat}
\bibliography{References}

\begin{thebibliography}{30}
\providecommand{\natexlab}[1]{#1}
\providecommand{\url}[1]{\texttt{#1}}
\expandafter\ifx\csname urlstyle\endcsname\relax
  \providecommand{\doi}[1]{doi: #1}\else
  \providecommand{\doi}{doi: \begingroup \urlstyle{rm}\Url}\fi

\bibitem[Abbasi-Yadkori et~al.(2018)Abbasi-Yadkori, Bartlett, Gabillon, Malek, and Valko]{abbasi2018best}
Yasin Abbasi-Yadkori, Peter Bartlett, Victor Gabillon, Alan Malek, and Michal Valko.
\newblock Best of both worlds: Stochastic \& adversarial best-arm identification.
\newblock In \emph{Conference on learning theory}, pages 918--949. PMLR, 2018.

\bibitem[Alieva et~al.(2021)Alieva, Cutkosky, and Das]{alieva_robust_2021}
Ayya Alieva, Ashok Cutkosky, and Abhimanyu Das.
\newblock Robust {Pure} {Exploration} in {Linear} {Bandits} with {Limited} {Budget}.
\newblock In \emph{Proceedings of the 38th {International} {Conference} on {Machine} {Learning}}, pages 187--195. PMLR, July 2021.
\newblock URL \url{https://proceedings.mlr.press/v139/alieva21a.html}.
\newblock ISSN: 2640-3498.

\bibitem[Allen-Zhu et~al.(2017)Allen-Zhu, Li, Singh, and Wang]{pmlr-v70-allen-zhu17e}
Zeyuan Allen-Zhu, Yuanzhi Li, Aarti Singh, and Yining Wang.
\newblock Near-optimal design of experiments via regret minimization.
\newblock In Doina Precup and Yee~Whye Teh, editors, \emph{Proceedings of the 34th International Conference on Machine Learning}, volume~70 of \emph{Proceedings of Machine Learning Research}, pages 126--135. PMLR, 06--11 Aug 2017.
\newblock URL \url{https://proceedings.mlr.press/v70/allen-zhu17e.html}.

\bibitem[Audibert and Bubeck(2010)]{audibert2010best}
Jean-Yves Audibert and S{\'e}bastien Bubeck.
\newblock Best arm identification in multi-armed bandits.
\newblock In \emph{COLT-23th Conference on learning theory-2010}, pages 13--p, 2010.

\bibitem[Azizi et~al.(2022)Azizi, Kveton, and Ghavamzadeh]{ijcai2022p388}
MohammadJavad Azizi, Branislav Kveton, and Mohammad Ghavamzadeh.
\newblock Fixed-budget best-arm identification in structured bandits.
\newblock In Lud~De Raedt, editor, \emph{Proceedings of the Thirty-First International Joint Conference on Artificial Intelligence, {IJCAI-22}}, pages 2798--2804. International Joint Conferences on Artificial Intelligence Organization, 7 2022.
\newblock \doi{10.24963/ijcai.2022/388}.
\newblock URL \url{https://doi.org/10.24963/ijcai.2022/388}.
\newblock Main Track.

\bibitem[Bubeck et~al.(2009)Bubeck, Munos, and Stoltz]{bubeck2009pure}
S{\'e}bastien Bubeck, R{\'e}mi Munos, and Gilles Stoltz.
\newblock Pure exploration in multi-armed bandits problems.
\newblock In \emph{International conference on Algorithmic learning theory}, pages 23--37. Springer, 2009.

\bibitem[Camilleri et~al.(2021{\natexlab{a}})Camilleri, Jamieson, and Katz-Samuels]{pmlr-v139-camilleri21a}
Romain Camilleri, Kevin Jamieson, and Julian Katz-Samuels.
\newblock High-dimensional experimental design and kernel bandits.
\newblock In Marina Meila and Tong Zhang, editors, \emph{Proceedings of the 38th International Conference on Machine Learning}, volume 139 of \emph{Proceedings of Machine Learning Research}, pages 1227--1237. PMLR, 18--24 Jul 2021{\natexlab{a}}.
\newblock URL \url{https://proceedings.mlr.press/v139/camilleri21a.html}.

\bibitem[Camilleri et~al.(2021{\natexlab{b}})Camilleri, Xiong, Fazel, Jain, and Jamieson]{camilleri2021selective}
Romain Camilleri, Zhihan Xiong, Maryam Fazel, Lalit Jain, and Kevin~G Jamieson.
\newblock Selective sampling for online best-arm identification.
\newblock \emph{Advances in Neural Information Processing Systems}, 34:\penalty0 11071--11082, 2021{\natexlab{b}}.

\bibitem[Chazelle(1993)]{10.1007/BF02573985}
Bernard Chazelle.
\newblock An optimal convex hull algorithm in any fixed dimension.
\newblock \emph{Discrete Comput. Geom.}, 10\penalty0 (4):\penalty0 377–409, December 1993.
\newblock ISSN 0179-5376.
\newblock \doi{10.1007/BF02573985}.
\newblock URL \url{https://doi.org/10.1007/BF02573985}.

\bibitem[Degenne et~al.(2020)Degenne, Menard, Shang, and Valko]{degenne_gamification_2020}
Rémy Degenne, Pierre Menard, Xuedong Shang, and Michal Valko.
\newblock Gamification of {Pure} {Exploration} for {Linear} {Bandits}.
\newblock In \emph{Proceedings of the 37th {International} {Conference} on {Machine} {Learning}}, pages 2432--2442. PMLR, November 2020.
\newblock URL \url{https://proceedings.mlr.press/v119/degenne20a.html}.
\newblock ISSN: 2640-3498.

\bibitem[Devroye et~al.(2016)Devroye, Lerasle, Lugosi, and Oliveira]{devroye2016sub}
Luc Devroye, Matthieu Lerasle, Gabor Lugosi, and Roberto~I Oliveira.
\newblock Sub-gaussian mean estimators.
\newblock \emph{Annals of Statistics}, 2016.

\bibitem[Even-Dar et~al.(2002)Even-Dar, Mannor, and Mansour]{goos_pac_2002}
Eyal Even-Dar, Shie Mannor, and Yishay Mansour.
\newblock {PAC} {Bounds} for {Multi}-armed {Bandit} and {Markov} {Decision} {Processes}.
\newblock In G.~Goos, J.~Hartmanis, J.~Van~Leeuwen, Jyrki Kivinen, and Robert~H. Sloan, editors, \emph{Computational {Learning} {Theory}}, volume 2375, pages 255--270. Springer Berlin Heidelberg, Berlin, Heidelberg, 2002.
\newblock ISBN 978-3-540-43836-6 978-3-540-45435-9.
\newblock \doi{10.1007/3-540-45435-7_18}.
\newblock URL \url{http://link.springer.com/10.1007/3-540-45435-7_18}.
\newblock Series Title: Lecture Notes in Computer Science.

\bibitem[Fiez et~al.(2019)Fiez, Jain, Jamieson, and Ratliff]{NEURIPS2019_8ba6c657}
Tanner Fiez, Lalit Jain, Kevin~G Jamieson, and Lillian Ratliff.
\newblock Sequential experimental design for transductive linear bandits.
\newblock In H.~Wallach, H.~Larochelle, A.~Beygelzimer, F.~d\textquotesingle Alch\'{e}-Buc, E.~Fox, and R.~Garnett, editors, \emph{Advances in Neural Information Processing Systems}, volume~32. Curran Associates, Inc., 2019.
\newblock URL \url{https://proceedings.neurips.cc/paper_files/paper/2019/file/8ba6c657b03fc7c8dd4dff8e45defcd2-Paper.pdf}.

\bibitem[Hoffman et~al.(2014)Hoffman, Shahriari, and Freitas]{hoffman2014correlation}
Matthew Hoffman, Bobak Shahriari, and Nando Freitas.
\newblock On correlation and budget constraints in model-based bandit optimization with application to automatic machine learning.
\newblock In \emph{Artificial Intelligence and Statistics}, pages 365--374. PMLR, 2014.

\bibitem[Jamieson and Talwalkar(2016)]{jamieson2016non}
Kevin Jamieson and Ameet Talwalkar.
\newblock Non-stochastic best arm identification and hyperparameter optimization.
\newblock In \emph{Artificial intelligence and statistics}, pages 240--248. PMLR, 2016.

\bibitem[Jedra and Proutiere(2020)]{NEURIPS2020_7212a656}
Yassir Jedra and Alexandre Proutiere.
\newblock Optimal best-arm identification in linear bandits.
\newblock In H.~Larochelle, M.~Ranzato, R.~Hadsell, M.F. Balcan, and H.~Lin, editors, \emph{Advances in Neural Information Processing Systems}, volume~33, pages 10007--10017. Curran Associates, Inc., 2020.
\newblock URL \url{https://proceedings.neurips.cc/paper_files/paper/2020/file/7212a6567c8a6c513f33b858d868ff80-Paper.pdf}.

\bibitem[Karmarkar(1984)]{10.1145/800057.808695}
N.~Karmarkar.
\newblock A new polynomial-time algorithm for linear programming.
\newblock In \emph{Proceedings of the Sixteenth Annual ACM Symposium on Theory of Computing}, STOC '84, page 302–311, New York, NY, USA, 1984. Association for Computing Machinery.
\newblock ISBN 0897911334.
\newblock \doi{10.1145/800057.808695}.
\newblock URL \url{https://doi.org/10.1145/800057.808695}.

\bibitem[Karnin et~al.(2013)Karnin, Koren, and Somekh]{karnin_almost_2013}
Zohar Karnin, Tomer Koren, and Oren Somekh.
\newblock Almost {Optimal} {Exploration} in {Multi}-{Armed} {Bandits}.
\newblock In \emph{Proceedings of the 30th {International} {Conference} on {Machine} {Learning}}, pages 1238--1246. PMLR, May 2013.
\newblock URL \url{https://proceedings.mlr.press/v28/karnin13.html}.
\newblock ISSN: 1938-7228.

\bibitem[Katz-Samuels et~al.(2020)Katz-Samuels, Jain, karnin, and Jamieson]{NEURIPS2020_75800f73}
Julian Katz-Samuels, Lalit Jain, zohar karnin, and Kevin~G Jamieson.
\newblock An empirical process approach to the union bound: Practical algorithms for combinatorial and linear bandits.
\newblock In H.~Larochelle, M.~Ranzato, R.~Hadsell, M.F. Balcan, and H.~Lin, editors, \emph{Advances in Neural Information Processing Systems}, volume~33, pages 10371--10382. Curran Associates, Inc., 2020.
\newblock URL \url{https://proceedings.neurips.cc/paper_files/paper/2020/file/75800f73fa80f935216b8cfbedf77bfa-Paper.pdf}.

\bibitem[Kaufmann et~al.(2016)Kaufmann, Capp{\'e}, and Garivier]{kaufmann2016complexity}
Emilie Kaufmann, Olivier Capp{\'e}, and Aur{\'e}lien Garivier.
\newblock On the complexity of best-arm identification in multi-armed bandit models.
\newblock \emph{The Journal of Machine Learning Research}, 17\penalty0 (1):\penalty0 1--42, 2016.

\bibitem[Lattimore and Szepesv{\'a}ri(2020)]{lattimore2020bandit}
Tor Lattimore and Csaba Szepesv{\'a}ri.
\newblock \emph{Bandit Algorithms}.
\newblock Cambridge University Press, 2020.
\newblock \doi{10.1017/9781108571401}.

\bibitem[Li et~al.(2018)Li, Jamieson, DeSalvo, Rostamizadeh, and Talwalkar]{li2018hyperband}
Lisha Li, Kevin Jamieson, Giulia DeSalvo, Afshin Rostamizadeh, and Ameet Talwalkar.
\newblock Hyperband: A novel bandit-based approach to hyperparameter optimization.
\newblock \emph{Journal of Machine Learning Research}, 18\penalty0 (185):\penalty0 1--52, 2018.

\bibitem[Mannor and Tsitsiklis(2004)]{mannor2004sample}
Shie Mannor and John~N Tsitsiklis.
\newblock The sample complexity of exploration in the multi-armed bandit problem.
\newblock \emph{Journal of Machine Learning Research}, 5\penalty0 (Jun):\penalty0 623--648, 2004.

\bibitem[Pukelsheim(2006)]{10.5555/1137748}
Friedrich Pukelsheim.
\newblock \emph{Optimal Design of Experiments (Classics in Applied Mathematics) (Classics in Applied Mathematics, 50)}.
\newblock Society for Industrial and Applied Mathematics, USA, 2006.
\newblock ISBN 0898716047.

\bibitem[Soare et~al.(2014)Soare, Lazaric, and Munos]{soare2014best}
Marta Soare, Alessandro Lazaric, and R{\'e}mi Munos.
\newblock Best-arm identification in linear bandits.
\newblock \emph{Advances in neural information processing systems}, 27, 2014.

\bibitem[Tao et~al.(2018)Tao, Blanco, and Zhou]{tao2018best}
Chao Tao, Sa{\'u}l Blanco, and Yuan Zhou.
\newblock Best arm identification in linear bandits with linear dimension dependency.
\newblock In \emph{International Conference on Machine Learning}, pages 4877--4886. PMLR, 2018.

\bibitem[Xiong et~al.(2024)Xiong, Camilleri, Fazel, Jain, and Jamieson]{xiong2024b}
Zhihan Xiong, Romain Camilleri, Maryam Fazel, Lalit Jain, and Kevin Jamieson.
\newblock A/b testing and best-arm identification for linear bandits with robustness to non-stationarity.
\newblock In \emph{International Conference on Artificial Intelligence and Statistics}, pages 1585--1593. PMLR, 2024.

\bibitem[Xu et~al.(2018)Xu, Honda, and Sugiyama]{xu2018fully}
Liyuan Xu, Junya Honda, and Masashi Sugiyama.
\newblock A fully adaptive algorithm for pure exploration in linear bandits.
\newblock In \emph{International Conference on Artificial Intelligence and Statistics}, pages 843--851. PMLR, 2018.

\bibitem[Yang and Tan(2022)]{NEURIPS2022_4f9342b7}
Junwen Yang and Vincent Tan.
\newblock Minimax optimal fixed-budget best arm identification in linear bandits.
\newblock In S.~Koyejo, S.~Mohamed, A.~Agarwal, D.~Belgrave, K.~Cho, and A.~Oh, editors, \emph{Advances in Neural Information Processing Systems}, volume~35, pages 12253--12266. Curran Associates, Inc., 2022.
\newblock URL \url{https://proceedings.neurips.cc/paper_files/paper/2022/file/4f9342b74c3bb63f6e030d8263082ab6-Paper-Conference.pdf}.

\bibitem[Ziegler(1995)]{Ziegler1995}
G{\"u}nter~M. Ziegler.
\newblock \emph{Lectures on Polytopes}, volume 152 of \emph{Graduate Texts in Mathematics}.
\newblock Springer-Verlag, New York, 1995.
\newblock ISBN 978-0-387-97495-8.
\newblock \doi{10.1007/978-1-4613-8431-1}.

\end{thebibliography}

\newpage
\appendix

\section{Lower Bound on Error Probability}\label{appendixlower}

\subsection{Proof of Theorem \ref{thm:finallower}}
\begin{proof}
    Fix an algorithm. By Lemma \ref{thm:optlower} we have there exists parameter sequences $\Bp{\theta_t}_{t=1}^T$ and $\Bp{\theta_t'}_{t=1}^T$, both with min-gap at least $\Delta_{(1)}$ such that \begin{equation}\max\Bp{\P_{\Bp{\theta_t}_{t=1}^{T}}\Sp{\widehat{x}\neq x_*}, \P_{\Bp{\theta'_t}_{t=1}^{T}}\Sp{\widehat{x}\neq x_*}} \geq \frac{1}{4} \exp \left(- T f\left(\X, \Delta_{(1)}\right) \right),\end{equation}
where $f\left(\X, \Delta_{(1)}\right)$ is defined in Lemma \ref{thm:optlower}. Denote \begin{equation}\hat{f}(\V, \Delta_{(1)}) = \begin{array}{rl}
        \max_{\lambda\in\triangle_{\X}}\min_{(x, x')\in\mc{I}}\min_{\theta, v\in\R^d} & v^\top A(\lambda)v \\
        \mbox{\rm subject to} & (x - y)^\top\theta\geq \Delta_{(1)} \quad\forall~y \in \V \setminus \{x\}\\
    & (x' - y)^\top(\theta + v)\geq \Delta_{(1)}\quad\forall~ y \in \V \setminus \{x'\}
    \end{array}.\end{equation}
    It follows that $f\left(\X, \Delta_{(1)}\right) \leq \hat{f}(\V, \Delta_{(1)})$. By Lemma $\ref{lem:equality}$, it follows that \begin{equation}\hat{f}(\V, \Delta_{(1)}) = \max_{\lambda\in\triangle_{\X}}\min_{(x, x')\in\mc{I}} \frac{4\Delta_{(1)}^2}{\|x - x'\|^2_{A(\lambda)^{-1}}}.\end{equation} Thus, we have that for any algorithm, there exist parameter sequences $\Bp{\theta_t}_{t=1}^{T}$ and $\Bp{\theta'_t}_{t=1}^{T}$, both with min-gap at least $\Delta_{(1)}$, such that
\begin{align*}
    \max\Bp{\P_{\Bp{\theta_t}_{t=1}^{T}}\Sp{\widehat{x}\neq x_*}, \P_{\Bp{\theta'_t}_{t=1}^{T}}\Sp{\widehat{x}\neq x_*}} &\geq \frac{1}{4} \exp \left(- T f\left(\X, \Delta_{(1)}\right) \right) \\
    &\geq \frac{1}{4} \exp \left(- T \hat{f}(\V, \Delta_{(1)}) \right) \\
    &=  \frac{1}{4} \exp \left(- \max_{\lambda\in\triangle_{\X}}\min_{(x, x')\in\mc{I}} \frac{4T\Delta_{(1)}^2}{\|x - x'\|^2_{A(\lambda)^{-1}}}\right)\\
    &= \frac{1}{4} \exp \left(-  \frac{4T\Delta_{(1)}^2}{\min_{\lambda\in\triangle_{\X}}\max_{(x, x')\in\mc{I}}\|x - x'\|^2_{A(\lambda)^{-1}}}\right).
\end{align*}
\end{proof}

\subsection{Technical Lemmas}
\subsubsection{Proof of Lemma \ref{thm:genlower}}
Before proceeding, we generalize the notation introduced in \citet{kaufmann2016complexity}. Fix an algorithm. For $T \in \N$, let $\nu = \{\nu_t\}_{t=1}^T$, be a non-stationary bandit with $K \in \N$ arms, where $\nu_{t,k}$ is the distribution of arm $k \in [K]$ at timestep $t \in [T]$.  We denote $A_t$ and $Z_t$ as the action and reward, respectively, of the algorithm at timestep $t$ under $\nu$.  We denote $\mc{F}_t = \sigma(A_1, Z_1,\dots, A_t, Z_t)$ as the information available to the algorithm after time step $t$. We denote $\P_\nu$ as the law of $(Z_t)_{t=1}^T$, and $\E_\nu$ as the expectation under $\P_\nu$. We denote $\widehat{k}$ represent the arm returned by the algorithm and $k_*= \argmax_k \sum_{t=1}^{T}\nu_{t,k}$ as the best arm. 

For non-stationary bandit ${\nu}$, and for any $k \in [K], t\in [T]$ there exists measure $\lambda_{t,k}$ such that $\nu_{t,k}$ has density $f_{t,k}$ \citep{kaufmann2016complexity}. Under non-stationary bandits  $\nu$ and $\nu'$, with each $\nu_{t,k}$ and $\nu'_{t,k}$ mutually absolutely continuous, denote the log-likelihood of $(Z_s)_{s=1}^t$ as\begin{equation}L_t=\sum_{k=1}^K\sum_{s=1}^t\mathds{1}_{\Bp{A_s=k}}\log\Sp{\frac{f_{s, k}(Z_s)}{f_{s, k}'(Z_s)}},\end{equation} where $A_s$ is the arm played by the algorithm at timestep $s$.

The following is a restatement of Lemma 18 from \citet{kaufmann2016complexity} generalized to non-stationary log-likelihood $L_\sigma$: 
\begin{lemma}\label{lem:stoppingtime}
    Let $\sigma$ be a stopping time with respect to $\mc{F}_t$. For every event $\mc{E} \in \mc{F}_\sigma$,
    \begin{equation}\P_{\nu'}(\mathcal{E}) = \E_{\nu}[\mathds{1}_{\mathcal{E}}\exp(-
L_{\sigma})].\end{equation}
\end{lemma}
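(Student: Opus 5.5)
The approach is the standard change-of-measure argument of Lemma 18 in \citet{kaufmann2016complexity}, adapted to the time-indexed densities $f_{s,k}$ and $f'_{s,k}$. I would first prove the identity for deterministic times $\sigma = t$, then extend it to stopping times by decomposing on $\{\sigma = t\}$.

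For fixed $t$, write the joint law of the history $(A_1,Z_1,\dots,A_t,Z_t)$ under $\nu$ with respect to a common dominating measure. The algorithm's (possibly randomized) choice of $A_s$ given $\mathcal{F}_{s-1}$ is a kernel $p_s(\cdot\mid \mathcal{F}_{s-1})$ that does not depend on the environment. The density of the history under $\nu$ is therefore
\[
\prod_{s=1}^t p_s(A_s\mid\mathcal{F}_{s-1})\, f_{s,A_s}(Z_s),
\]
and under $\nu'$ the same expression holds with $f'$ in place of $f$. The policy factors cancel in the likelihood ratio, so the ratio of the $\nu'$-density to the $\nu$-density is
\[
\prod_{s\le t} \frac{f'_{s,A_s}(Z_s)}{f_{s,A_s}(Z_s)} = \exp(-L_t).
\]
Mutual absolute continuity of each pair $\nu_{s,k}$, $\nu'_{s,k}$ makes this ratio finite and positive almost surely. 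Hence, for every $\mathcal{E}\in\mathcal{F}_t$,
\[
\P_{\nu'}(\mathcal{E})=\E_\nu[\mathds{1}_{\mathcal{E}}\exp(-L_t)].
\]
This is Radon--Nikodym bookkeeping; the only care needed is to keep the internal randomization of the algorithm as part of the kernel $p_s$, or equivalently to condition on an independent seed.

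For a general stopping time $\sigma$ and $\mathcal{E}\in\mathcal{F}_\sigma$, note that $\mathcal{E}\cap\{\sigma=t\}\in\mathcal{F}_t$. Applying the fixed-$t$ identity to each such piece and summing over $t$ gives
\[
\P_{\nu'}(\mathcal{E}\cap\{\sigma<\infty\}) = \sum_t \E_\nu\!\left[\mathds{1}_{\mathcal{E}\cap\{\sigma=t\}}\exp(-L_t)\right] = \E_\nu[\mathds{1}_{\mathcal{E}}\exp(-L_\sigma)],
\]
where the exchange of sum and expectation is justified by monotone convergence. In our application $\sigma\le T$ is bounded, so the event $\{\sigma<\infty\}$ is trivial.

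No step is a real obstacle. The only point needing attention is checking that non-stationarity enters solely through the time-indexed densities $f_{s,k}$, and that no term in the likelihood ratio depends on $\nu$ through the action kernels.
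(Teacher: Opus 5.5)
Your proposal is correct and follows essentially the same route as the paper. The paper proves the fixed-time identity $\E_{\nu'}[g(Z_1,\dots,Z_n)]=\E_{\nu}[g(Z_1,\dots,Z_n)\exp(-L_n)]$ by induction on $n$, peeling off one reward at a time by conditioning on $\mathcal{F}_n$; this is a stepwise version of your one-shot product-density computation. It then extends to stopping times exactly as you do, by decomposing on $\{\sigma=t\}$ (deferring to Lemma 18 of \citet{kaufmann2016complexity}), and your explicit handling of the algorithm's internal randomization and of the $\{\sigma<\infty\}$ caveat (moot here since $\sigma\le T$) is slightly more careful than the paper's write-up.
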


The proof follows the exact same structure as Lemma 18 from \citet{kaufmann2016complexity}, with a slight modification to the inductive step.
\begin{proof}
    For any $k \in [K]$, let $(Y_{t,k})_{t \in \N}$ be a sequence such that if $A_t = k$ then $Z_t  =Y_{t,k}$. First, we aim to show that for any $n \in \N$ and measurable function $g:\R^n \rightarrow \R$, \begin{equation}\E_{\nu'}[g(Z_1,\dots,Z_{n})] = \E_{\nu}[g(Z_1,\dots,Z_{n})\exp\left(-L_{n}(Z_1, \ldots, Z_n)\right)].\end{equation} When $n =1 $, $\E_{\nu'}[g(Z_1)] = \E_{\nu}[g(Z_1)\exp\left(-L_{1})\right)]$ from the proof of \citet{kaufmann2016complexity} Lemma 18. 
    
    Now assume the statement holds for some $n \in \N$. We aim to show it holds for $n + 1$. Let  $g:\R^{n+1} \rightarrow \R$ be a measurable function. We have \begin{align*}
    &\E_{\nu'}[g(Z_1,\dots,Z_{n + 1})]\\
    =& \E_{\nu}\left[\sum_{k=1}^{K} \mathds{1}_{A_{n+1}=k} \E_{\nu'} \left[ g(Z_1, \ldots, Z_n, Y_{n+1,k}) \mid \mathcal{F}_n \right] \exp\left(-L_n(Z_1, \ldots, Z_n)\right)
\right] \\
=& \E_{\nu}\left[\sum_{k=1}^{K} \mathds{1}_{A_{n+1}=k} \int  g(Z_1, \ldots, Z_n, z) \frac{f_{n+1, k}'(z)}{f_{n+1, k}(z)}f_{n+1, k}(z)d\lambda_{n+1,k}\exp\left(-L_n(Z_1, \ldots, Z_n)\right)
\right]. \\
\end{align*}
On the event $(A_{n +1} = k)$, we have $L_{n+1}(Z_1, \ldots, Z_n,z) = L_{n}(Z_1, \ldots, Z_n) + \log\frac{f_{n+1, k}(z)}{f'_{n+1, k}(z)}$, thus:
\begin{align*}
    &\E_{\nu'}[g(Z_1,\dots,Z_{n + 1})] \\
    =& \E_{\nu}\left[\sum_{k=1}^{K} \mathds{1}_{A_{n+1}=k} \int  g(Z_1, \ldots, Z_n, z) \exp\left(-L_{n+1}(Z_1, \ldots, Z_n, z)\right)f_{n+1, k}(z)d\lambda_{n+1,k} \right]\\
    =& \E_{\nu}[g(Z_1,\dots,Z_{n + 1})\exp\left(-L_{n+1}(Z_1, \ldots, Z_n, Z_{n+1})\right)].
\end{align*}
It follows that this holds for all $n \in \N$, and thus for every $\mc{E} \in \mc{F}_n$ we have
\begin{equation}\P_{\nu'}(\mathcal{E}) = \E_{\nu}[\mathds{1}_{\mathcal{E}}\exp(-
L_{n})].\end{equation}
Let $\sigma$ be a stopping time with respect to $\mc{F}_t$. From the proof of \citet{kaufmann2016complexity} Lemma 18, it follows that for every event $\mc{E} \in \mc{F}_\sigma$:
    \begin{equation}\P_{\nu'}(\mathcal{E}) = \E_{\nu}[\mathds{1}_{\mathcal{E}}\exp(-
L_{\sigma})].\end{equation}
\end{proof}

We can now prove Lemma \ref{thm:genlower}.
\begin{proof}
    Denoting $\rho_\nu, \rho_{\nu'}$ the distributions of $\widehat{k}$ under $\nu$ and $\nu'$, respectively, from the proof of Lemma 15 from \citet{kaufmann2016complexity} we have
    \begin{equation}\max\left(\P_{\nu}\Sp{\widehat{k}\neq k_*}, \P_{\nu'}\Sp{\widehat{k}\neq k_*}\right) \geq \frac{1}{4}\exp\left(-\mathrm{KL}\left(\rho_\nu, \rho_{\nu'}\right)\right).\end{equation} 
    From Lemma \ref{lem:stoppingtime} and the proof of Lemma 19 from \citet{kaufmann2016complexity} it follows that for any event $\mathcal{E}$:
    \begin{equation}\P_{\nu'}(\mathcal{E}) \geq \exp\left(-\E_{\nu}\left[L_{\sigma} \mid \mathcal{E}\right]\right)\P_{\nu}(\mathcal{E}),\end{equation}
    and thus,
    \begin{equation}\E_{\nu}\left[L_{\sigma} \mid \mathcal{E}\right] \geq \log \frac{\P_{\nu}(\mathcal{E})}{\P_{\nu'}(\mathcal{E})}.\end{equation}
    It follows that $\E_{\nu}\left[L_{T} \mid \widehat{k} \right] \geq \log \frac{\P_{\nu}(\widehat{k})}{\P_{\nu'}(\widehat{k})}$. So, using the tower rule we obtain:
    \begin{align*}
        \E_{\nu}[L_{T}] &= \E_{\nu}[\E_{\nu}[L_{T}\mid \widehat{k}]] \\
        &\geq \E_{\nu}\left[\log \frac{\P_{\nu}(\widehat{k})}{\P_{\nu'}(\widehat{k})}\right] \\
        &= \mathrm{KL}\left(\rho_\nu, \rho_{\nu'}\right).
    \end{align*}
Now we have \begin{equation}\max(\P_{\nu}\Sp{\widehat{k}\neq k_*}, \P_{\nu'}\Sp{\widehat{k}\neq k_*}) \geq \frac{1}{4}\exp\left(-\E_{\nu}[L_{T}]\right).\end{equation}
    It follows that:
    \begin{align*}
        &\max(\P_{\nu}\Sp{\widehat{k}\neq k_*}, \P_{\nu'}\Sp{\widehat{k}\neq k_*}) \\&\geq 
        \frac{1}{4} \exp \left(-\E_{\nu}[L_T]\right)\\
        &=\frac{1}{4} \exp \left(-\E_{\nu}\left[\sum_{k=1}^K\sum_{t=1}^T\mathds{1}_{\Bp{A_t=k}}\log\Sp{\frac{f_{k, t}(Z_t)}{f_{k, t}'(Z_t)}} \right]\right) \\
        &= \frac{1}{4} \exp \left(-\sum_{k=1}^K\sum_{t=1}^T\E_{\nu}\left[\mathds{1}_{\Bp{A_t=k}}\log\Sp{\frac{f_{k, t}(Z_t)}{f_{k, t}'(Z_t)}} \right]\right) \\
        &= \frac{1}{4} \exp \left(-\sum_{k=1}^K\sum_{t=1}^T\E_{\nu}\left[\E_{\nu}\left[\mathds{1}_{\Bp{A_t=k}}\log\Sp{\frac{f_{k, t}(Z_t)}{f_{k, t}'(Z_t)}} \Bigg| A_t = k\right]\right]\right) \\
        &= \frac{1}{4} \exp \left(-\sum_{k=1}^K\sum_{t=1}^T\E_{\nu}\left[\mathds{1}_{\Bp{A_t=k}}\E_{\nu}\left[\log\Sp{\frac{f_{k, t}(Z_t)}{f_{k, t}'(Z_t)}} \Bigg| A_t = k\right]\right]\right) \\
        &= \frac{1}{4} \exp \left(-\sum_{k=1}^K\sum_{t=1}^T\E_{\nu}\left[\mathds{1}_{\Bp{A_t=k}}\mathrm{KL}(\nu_{t,k}, \nu'_{t,k})\right]\right) \\
        &= \frac{1}{4} \exp \left(-\sum_{k=1}^K\sum_{t=1}^T\E_{\nu}\left[\mathds{1}_{\Bp{A_t=k}}\right]\mathrm{KL}(\nu_{t,k}, \nu'_{t,k})\right) \\
        &= \frac{1}{4} \exp \left(-\sum_{k=1}^K\sum_{t=1}^T\P_{\nu}(A_t=k) \cdot \mathrm{KL}(\nu_{t,k}, \nu'_{t,k})\right)
    \end{align*}
\end{proof}
\subsubsection{Proof of Lemma \ref{thm:optlower}}

\begin{proof}
    Without loss of generality, assume $T$ is even. For $t \leq \frac{T}{2}$, set $\theta_t = 0$ and $\theta'_t = v$, for some $v$ to be chosen later. For $t>\frac{T}{2}$ set $\theta_t =\theta_t' = \theta$, for $\theta$ to be chosen later. Assume $\Bp{\theta_t}_{t=1}^{T}$ and $\Bp{\theta'_t}_{t=1}^{T}$ have min-gap at least $\Delta_{(1)}$ and have different best arms. Further, we let $\epsilon_t \sim \mc{N}(0,1)$ for each $t$. Fix an algorithm. Denote \begin{equation}p(x_t = x) := \P_{\{\theta_t\}_{t=1}^{T/2}}(x_t=x).\end{equation} By Lemma \ref{thm:genlower} we have
    \begin{align*}&\max\Bp{\P_{\Bp{\theta_t}_{t=1}^{T}}\Sp{\widehat{x}\neq x_*}, \P_{\Bp{\theta'_t}_{t=1}^{T}}\Sp{\widehat{x}\neq x_*}}\\ 
    &\geq \frac{1}{4} \exp \left(-\sum_{x \in \X}\left(\sum_{t=1}^{T/2}\P_{\{\theta_t\}_{t=1}^{T}}(x_t=x)\frac{(x^\top v)^2}{2}  + \sum_{t=T/2 + 1}^{T}\P_{\{\theta_t\}_{t=1}^{T}}(x_t=x)  \frac{(x^\top \theta - x^\top \theta)^2}{2}\right)\right) \\
         &= \frac{1}{4} \exp \left(- \sum_{x \in \X} \sum_{t=1}^{T/2}\P_{\{\theta_t\}_{t=1}^{T}}(x_t=x)\frac{(x^\top v)^2}{2}\right) \\
         &= \frac{1}{4} \exp \left(- \sum_{x \in \X} \sum_{t=1}^{T/2}\P_{\{\theta_t\}_{t=1}^{T/2}}(x_t=x)\frac{(x^\top v)^2}{2}\right) \\
        &= \frac{1}{4} \exp \left(- \sum_{x \in \X}  \sum_{t=1}^{T/2}p(x_t = x)\frac{(x^\top v)^2}{2}\right),
    \end{align*}
where the second-to-last equality follows because algorithms cannot depend on the future. Notice that \begin{align*}
\sum_{x \in \X}  \sum_{t=1}^{T/2}p(x_t = x)\frac{(x^\top v)^2}{2}
    &=\frac{1}{2}v^\top\left(\sum_{x \in \X}  \sum_{t=1}^{T/2}p(x_t = x)xx^\top\right) v \\
    &= \frac{T}{4}v^\top\left(\sum_{x \in \X}\sum_{t=1}^{T/2}  \frac{2p(x_t = x)}{T}xx^\top\right) v.
\end{align*}
Denote $\lambda_x:=\sum_{t=1}^{T/2}\frac{2 p(x_t = x) }{T}$. It follows that $\lambda \in \triangle_{\X}$. We thus have \begin{equation}\sum_{x \in \X}\sum_{t=1}^{T/2}  p(x_t = x)\frac{(x^\top v)^2}{2} = \frac{T}{4}v^\top A(\lambda)v.\end{equation} We now have there exist parameter sequences $\Bp{\theta_t}_{t=1}^{T}$ and $\Bp{\theta'_t}_{t=1}^{T}$ such that: 
\begin{equation}\max\Bp{\P_{\Bp{\theta_t}_{t=1}^{T}}\Sp{\widehat{x}\neq x_*}, \P_{\Bp{\theta'_t}_{t=1}^{T}}\Sp{\widehat{x}\neq x_*}} \geq \frac{1}{4} \exp \left(- \frac{T}{4}v^\top A(\lambda)v \right),\end{equation}
subject to $\Bp{\theta_t}_{t=1}^{T}$ and $\Bp{\theta'_t}_{t=1}^{T}$ having different best arms and maintaining min-gap $\Delta_{(1)}$. The previous constraints can be written formally as ``$\exists (x,x') \in \mc{Y}$ such that $\frac{1}{T}\sum_{t=1}^{T}(x - y)^\top\theta_t \geq \Delta_{(1)}$ for all $y \in \V \setminus \{x\}$, and $ \frac{1}{T}\sum_{t=1}^{T}(x' - y)^\top\theta_t \geq \Delta_{(1)}$ for all $y \in \V \setminus \{x'\}$". By our construction this is equivalent to ``$\exists (x,x') \in \mc{Y}$ such that $(x-y)^\top\theta' \geq \Delta_{(1)}$ for all $y \in \V \setminus \{x\}$, and $ (x'-y)^\top(\theta' + v') \geq \Delta_{(1)}$ for all $y \in \V \setminus \{x'\}$", where $\theta' = \frac{\theta}{2}$ and $v' = \frac{v}{2}$. Fix some $(x,x') \in \mc{Y}$. Since $\Bp{\theta_t}_{t=1}^{T/2}$ is independent of $\theta'$ and $v'$, we have $\lambda$ is also independent of $\theta'$ and $v'$, so we are free to choose any value of $\theta',v'$. Thus, we may choose the $\theta',v'$ such that the right-hand side above is maximized. Formally, we now have for some $(x,x') \in \mc{Y}$, there exist parameter sequences $\Bp{\theta_t}_{t=1}^{T}$ and $\Bp{\theta'_t}_{t=1}^{T}$ such that: 
\begin{equation}\max\Bp{\P_{\Bp{\theta_t}_{t=1}^{T}}\Sp{\widehat{x}\neq x_*}, \P_{\Bp{\theta'_t}_{t=1}^{T}}\Sp{\widehat{x}\neq x_*}} \geq \max_{\theta',v' \in \R^d}\frac{1}{4} \exp \left(- Tv'^\top A(\lambda)v' \right),\end{equation}
subject to $(x - y)^\top\theta' \geq \Delta_{(1)}$ for all $y \in \V \setminus \{x\}$, and $ (x' - y)^\top(\theta' + v') \geq \Delta_{(1)}$ for all $y \in \V \setminus \{x'\}$. Similarly, since $\lambda$ is independent of $(x,x')$, we may choose the $(x,x')$ pair that maximizes the above right-hand side. That is, we now have there exist parameter sequences $\Bp{\theta_t}_{t=1}^{T}$ and $\Bp{\theta'_t}_{t=1}^{T}$ such that: 
\begin{equation}\max\Bp{\P_{\Bp{\theta_t}_{t=1}^{T}}\Sp{\widehat{x}\neq x_*}, \P_{\Bp{\theta'_t}_{t=1}^{T}}\Sp{\widehat{x}\neq x_*}} \geq \max_{(x,x') \in \mc{Y}}\max_{\theta',v' \in \R^d}\frac{1}{4} \exp \left(-  Tv'^\top A(\lambda)v' \right),\end{equation}
subject to $(x - y)^\top\theta' \geq \Delta_{(1)}$ for all $y \in \V \setminus \{x\}$, and $ (x' - y)^\top(\theta' + v') \geq \Delta_{(1)}$ for all $y \in \V \setminus \{x'\}$. Now for this bound to hold for any algorithm, we take a minimum over every possible $\lambda$ that could be induced by an algorithm, obtaining that for any algorithm there exist parameter sequences $\Bp{\theta_t}_{t=1}^{T}$ and $\Bp{\theta'_t}_{t=1}^{T}$  such that: 
\begin{align*}
    \max\Bp{\P_{\Bp{\theta_t}_{t=1}^{T}}\Sp{\widehat{x}\neq x_*}, \P_{\Bp{\theta'_t}_{t=1}^{T}}\Sp{\widehat{x}\neq x_*}} &\geq \min_{\lambda \in \triangle_{\X}}\max_{(x,x') \in \mc{Y}}\max_{\theta',v' \in \R^d}\frac{1}{4} \exp \left(-  Tv'^\top A(\lambda)v' \right) \\
    &= \frac{1}{4} \exp \left(- T \max_{\lambda \in \triangle_{\X}}\min_{(x,x') \in \mc{Y}}\min_{\theta',v' \in \R^d} v'^\top A(\lambda)v'\right),
\end{align*}
subject to $(x - y)^\top\theta' \geq \Delta_{(1)}$ for all $y \in \V \setminus \{x\}$, and $ (x' - y)^\top(\theta' + v') \geq \Delta_{(1)}$ for all $y \in \V \setminus \{x'\}$ which is exactly \begin{equation}\max\Bp{\P_{\Bp{\theta_t}_{t=1}^{T}}\Sp{\widehat{x}\neq x_*}, \P_{\Bp{\theta'_t}_{t=1}^{T}}\Sp{\widehat{x}\neq x_*}} \geq \frac{1}{4} \exp \left(- T f\left(\X, \Delta_{(1)}\right) \right),\end{equation}
by definition of $f\left(\X, \Delta_{(1)}\right)$.
\end{proof}

\subsubsection{Proof of Lemma \ref{lem:equality}}

\begin{proof}
    Direct corollary of 
    Lemma \ref{lem:optlower} and \ref{lem:optupper}.
\end{proof}

\begin{lemma} \label{lem:optlower}
Let $\X \subset \mathbb{R}^d$ and $\Delta_{(1)} > 0$. Consider positive definite $A \in \R^{d\times d}$ and $(x,x') \in \mc{Y}$. We have:
    \begin{equation}\begin{array}{rl}
    \min_{\theta, v\in\R^d} & v^\top Av \\
    \mbox{\rm subject to} & (x - y)^\top\theta\geq \Delta_{(1)} \quad\forall~y \in \V \setminus \{x\}\\
    & (x' - y)^\top(\theta + v)\geq \Delta_{(1)}\quad\forall~ y \in \V \setminus \{x'\}
\end{array} \geq \frac{4\Delta_{(1)}^2}{\|x - x'\|^2_{A^{-1}}}.\end{equation}
\end{lemma}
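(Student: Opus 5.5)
Take any feasible pair $(\theta,v)$ and use only two of the constraints: the one for $y=x'$ in the first family and the one for $y=x$ in the second. Both are available because $x\neq x'$ are both extreme points, so $x'\in\V\setminus\{x\}$ and $x\in\V\setminus\{x'\}$. These two constraints read
\begin{equation*}
(x-x')^\top\theta\geq\Delta_{(1)},\qquad (x'-x)^\top(\theta+v)\geq\Delta_{(1)}.
\end{equation*}
Adding them cancels $\theta$ and gives $(x'-x)^\top v\geq 2\Delta_{(1)}$. Every other constraint is discarded; dropping constraints only enlarges the feasible set, so a lower bound for this relaxed problem is also a lower bound for the original minimum.

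Next, lower bound $v^\top Av$ under the single linear constraint $(x'-x)^\top v\geq 2\Delta_{(1)}$ using Cauchy--Schwarz in the $A$-geometry. Since $A$ is positive definite, write $(x'-x)^\top v=(A^{-1/2}(x'-x))^\top(A^{1/2}v)$. This yields
\begin{equation*}
2\Delta_{(1)}\leq (x'-x)^\top v\leq \|x-x'\|_{A^{-1}}\,\|v\|_A .
\end{equation*}
Squaring is legitimate because the left side is positive. We also have $\|x-x'\|_{A^{-1}}>0$, since $x\neq x'$ and $A^{-1}$ is positive definite, so we may divide by it. The result is $v^\top Av\geq 4\Delta_{(1)}^2/\|x-x'\|_{A^{-1}}^2$ for every feasible $(\theta,v)$.

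Taking the infimum over feasible points gives the claim. If the feasible set is empty, the minimum is $+\infty$ by convention and the bound holds trivially. There is no real obstacle here. The only points needing care are that $x$ and $x'$ appear in each other's constraint sets, and that $A$ being positive definite justifies both the $A^{1/2}$ factorization and the division.
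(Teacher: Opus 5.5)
Your proposal is correct and follows essentially the same argument as the paper: add the $y=x'$ and $y=x$ constraints to get $(x'-x)^\top v\geq 2\Delta_{(1)}$, then apply Cauchy--Schwarz in the $A$-geometry and square. Your extra remarks on the empty feasible set, on $\|x-x'\|_{A^{-1}}>0$, and on the validity of squaring are harmless additions to that proof.
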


\begin{proof}
    From the constraints we have that $(x - x')^\top\theta\geq \Delta_{(1)}$ and $(x' - x)^\top(\theta + v)\geq \Delta_{(1)}$. Adding these constraints together we obtain that feasible $v$ must satisfy $(x' - x)^\top v \geq 2\Delta_{(1)}$. Consider feasible $v$. We have: \begin{align*}
    2\Delta_{(1)} &\leq  (x' - x)^\top v \\
    &\leq \|x'-x\|_{A^{-1}} \cdot \|v\|_{A} && \text{(Cauchy-Schwarz)}\\
    &= \|x-x'\|_{A^{-1}} \cdot \sqrt{v^{\top}Av}.
\end{align*}
Squaring both sides, we obtain \begin{equation}4\Delta_{(1)}^2 \leq \|x-x'\|_{A^{-1}}^2 \cdot v^{\top}Av.\end{equation}
That is 
\begin{equation}\frac{4\Delta_{(1)}^2}{\|x-x'\|_{A^{-1}}^2} \leq v^{\top}Av.\end{equation}
This holds for any feasible $v$, thus it must hold at the optimum.
\end{proof}

\begin{lemma} \label{lem:optupper}
    Let $\X \subset \mathbb{R}^d$ and $\Delta_{(1)} > 0$. Consider positive definite $A \in \R^{d\times d}$ and $(x,x') \in \mc{I}$. Suppose $(x,x') \in \mc{I}$. We have:
\begin{equation}\begin{array}{rl}
    \min_{\theta, v\in\R^d} & v^\top Av \\
    \mbox{\rm subject to} & (x - y)^\top\theta\geq \Delta_{(1)} \quad\forall~y \in \V \setminus \{x\}\\
    & (x' - y)^\top(\theta + v)\geq \Delta_{(1)}\quad\forall~ y \in \V \setminus \{x'\}
\end{array} \leq \frac{4\Delta_{(1)}^2}{\|x - x'\|^2_{A^{-1}}}.\end{equation}
\end{lemma}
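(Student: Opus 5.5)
We will exhibit an explicit feasible pair $(\theta,v)$ whose objective value equals $p^* := \frac{4\Delta_{(1)}^2}{\|x-x'\|^2_{A^{-1}}}$, following the sketch given after Lemma \ref{lem:equality}. Write $g := x'-x \neq 0$; since $A$ is positive definite, $\|g\|_{A^{-1}}>0$.

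\emph{Choosing $v$ and $u$.} Set
\begin{equation*}
v := \frac{2\Delta_{(1)}}{\|g\|^2_{A^{-1}}}A^{-1}g .
\end{equation*}
Then $v^\top A v = \frac{4\Delta_{(1)}^2}{\|g\|^2_{A^{-1}}}=p^*$ and $g^\top v = 2\Delta_{(1)}$. Set
\begin{equation*}
u := -\frac{\Delta_{(1)}}{\|g\|^2_{A^{-1}}}A^{-1}g,
\end{equation*}
so that $(x-x')^\top u = \Delta_{(1)}$ and $(x'-x)^\top(u+v) = -\Delta_{(1)}+2\Delta_{(1)} = \Delta_{(1)}$. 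Thus the two ``cross'' constraints ($y=x'$ in the first family, $y=x$ in the second) hold with equality for $\theta=u$.

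\emph{Using adjacency.} Since $(x,x')\in\I$, Eq. \eqref{eq:adjacentdef} gives $w\in\R^d$ with $\{x,x'\}=\argmax_{y\in\V}y^\top w$. Because $\V$ is finite, there is $\varepsilon>0$ with
\begin{equation*}
x^\top w = x'^\top w \ge y^\top w+\varepsilon \quad \text{for all } y\in\V\setminus\{x,x'\}.
\end{equation*}
Let $\theta := u+\alpha w$ with $\alpha\ge 0$. Since $(x-x')^\top w=0$, the cross constraints are unaffected by $\alpha$. For $y\in\V\setminus\{x,x'\}$,
\begin{equation*}
(x-y)^\top\theta \ge (x-y)^\top u+\alpha\varepsilon, \qquad (x'-y)^\top(\theta+v) \ge (x'-y)^\top(u+v)+\alpha\varepsilon .
\end{equation*}
There are finitely many such $y$, and the terms $(x-y)^\top u$ and $(x'-y)^\top(u+v)$ are fixed numbers. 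Hence taking
\begin{equation*}
\alpha \ge \max_{y}\max\Bp{\frac{\Delta_{(1)}-(x-y)^\top u}{\varepsilon},\ \frac{\Delta_{(1)}-(x'-y)^\top(u+v)}{\varepsilon},\ 0}
\end{equation*}
makes all remaining constraints hold. (If $\V=\{x,x'\}$, nothing further is needed.) So $(\theta,v)$ is feasible with objective value $p^*$, and the minimum is at most $p^*$.

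\emph{Main obstacle.} The only delicate point is that shifting $\theta$ must not disturb the cross constraints while uniformly improving all the others. Adjacency provides exactly such a direction: $w$ is orthogonal to $x-x'$ and strictly separates $\{x,x'\}$ from the rest of $\V$. We also need that $v$ is added to $\theta$ and not shifted separately; this is handled because the second-family bound uses $\theta+v$, which contains the same $\alpha w$ shift.
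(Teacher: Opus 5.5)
Your proof is correct and matches the paper's argument: the same $v=-\frac{2\Delta_{(1)}}{\|x-x'\|^2_{A^{-1}}}A^{-1}(x-x')$, the same base point $u=\frac{\Delta_{(1)}}{\|x-x'\|^2_{A^{-1}}}A^{-1}(x-x')$, and the same shift $\theta=u+\alpha w$ along the adjacency direction $w$ with $\alpha$ taken large enough. You also state explicitly that $\alpha\ge 0$ (needed for $\alpha(x-y)^\top w\ge\alpha\varepsilon$) and handle the degenerate case $\V=\{x,x'\}$; the paper leaves both points implicit.
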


\begin{proof}
    By definition of $\mc{I}$ there exists $w \in \R^d$ such that \begin{equation}\{x,x'\} = \argmax_{y \in \mc{V}} y^\top w.\end{equation} So, there exists $\varepsilon > 0$, such that $x^\top w =x'^\top w \geq y^\top w + \varepsilon $ for all $y \in \V \setminus \{x,x'\}$. Choose: \begin{equation}\theta =\frac{\Delta_{(1)}}{\|x-x'\|_{A^{-1}}^2}A^{-1}(x-x') + \alpha w,\end{equation} for some $\alpha$ to be chosen later, and \begin{equation}v = - \frac{2\Delta_{(1)}}{\|x-x'\|_{A^{-1}}^2}A^{-1}(x-x').\end{equation} First, it is clear that: \begin{equation}v^\top Av = \frac{4\Delta_{(1)}^2}{\|x - x'\|^2_{A^{-1}}}.\end{equation} We need only check feasibility now. For the rest of the proof denote: \begin{equation}B_{z,y} := \frac{\Delta_{(1)}}{\|x-x'\|_{A^{-1}}^2}(z- y)^\top A^{-1}(x-x'),\end{equation} for $z \in \{x,x'\}$ and some $y$. Let us first analyze the first block of constraints. Consider the case when $y = x'$. Since $(x -x')^\top w = 0$, we have: \begin{align*}
        (x - x')^\top\theta &= \frac{\Delta_{(1)}}{\|x-x'\|_{A^{-1}}^2}(x-x')^{\top}A^{-1}(x-x')+ \alpha(x -x')^\top w\\
        &= \Delta_{(1)}
    \end{align*}
    which satisfies the constraint. Consider $y \in \V \setminus \{x,x'\}$. Using that $(x -y)^\top w \geq \varepsilon$, we have: \begin{align*}
        (x - y)^\top\theta &= \frac{\Delta_{(1)}}{\|x-x'\|_{A^{-1}}^2}(x - y)^\top A^{-1}(x-x') + \alpha(x -y)^\top w\\
        &= B_{x,y} + \alpha(x -y)^\top w \\
        &\geq B_{x,y} + \alpha \varepsilon,
    \end{align*}
    which we want to be greater than $\Delta_{(1)}$ to fulfill the constraint. This tells us $\alpha $ must satisfy: \begin{equation}\alpha \geq \frac{\Delta_{(1)}- \min_{y}B_{x,y}}{\varepsilon}.\end{equation}
    Now let's analyze the second block of constraints. Consider $y=x$. We have:
    \begin{align*}
        (x' - x)^\top(\theta + v) &= \frac{\Delta_{(1)} - 2\Delta_{(1)}}{\|x-x'\|_{A^{-1}}^2}(x' - x)^\top A^{-1}(x-x') + \alpha(x' -x)^\top w\\
        &= -(-\Delta_{(1)})\\
        &= \Delta_{(1)},
    \end{align*}
    which satisfies the constraint. Consider $y \in \V \setminus \{x,x'\}$. We have \begin{align*}
        (x' - y)^\top\theta &= \frac{\Delta_{(1)} - 2\Delta_{(1)}}{\|x-x'\|_{A^{-1}}^2}(x' - y)^\top A^{-1}(x-x') + \alpha(x' -y)^\top w\\
        &=\frac{-\Delta_{(1)}}{\|x-x'\|_{A^{-1}}^2}(x' - y)^\top A^{-1}(x-x') + \alpha(x' -y)^\top w \\
        &= - B_{x',y} + \alpha(x' -y)^\top w \\
        &= - B_{x',y} + \alpha(x -y)^\top w \\
        &\geq - B_{x',y}  + \alpha \varepsilon,
    \end{align*}
    which we also want to be greater than $\Delta_{(1)}$. This means $\alpha$ must also satisfy: \begin{equation}\alpha \geq \frac{\Delta_{(1)}+ \max_{y}B_{x',y}}{\varepsilon}.\end{equation}
Choosing:
    \begin{equation}\alpha = \frac{\Delta_{(1)} + \max\{-\min_{y}B_{x,y},\max_{y}B_{x',y}\}}{\varepsilon},\end{equation}
    completes the proof.
\end{proof}

\section{Error probability of Algorithm \ref{algo:adjacent-bai}}\label{appendixupper}
\subsection{Proof of Theorem \ref{thm:upper_bound1}}
\begin{proof}
    We have
    \begin{align*}
        \P\Sp{\widehat{x}\neq x_*}&=\P\Sp{\exists x 
        \in \X  \text{ s.t. }x^\top\htheta_T>x_*^\top\htheta_T}\\
        &= \P\Sp{\exists x\in \I^{x_*}\text{ s.t. }x^\top\htheta_T> x_*^\top\htheta_T}  \tag{Lemma \ref{lem:adjacency_optimal}}\\
        &\leq  \sum_{x \in \I^{x_*} }\P\Sp{\htheta_T^\top(x-x_*)> 0}\\
        &= \sum_{x \in \I^{x_*}}\P\Sp{\htheta_T^\top(x-x_*) - \otheta_T^\top(x-x_*)>-\otheta_T^\top(x-x_*)}\\
        &\leq \sum_{x \in \I^{x_*}}\P\Sp{\htheta_T^\top(x-x_*) - \otheta_T^\top(x-x_*)> \Delta_{(1)}}. \\
        &=  \sum_{x \in \I^{x_*}}\P\Sp{(x-x_*)^\top\left(\htheta_T -\otheta_T\right)> \Delta_{(1)}} \\
        &\leq \sum_{x \in \I^{x_*}}\exp\left(- \frac{\Delta_{(1)}^2}{2\cdot 3^2 \cdot \|x - x_*\|_{\left(\sum_{t=1}^Tx_tx_t^\top\right)^{-1}}^2}\right) \tag{Lemma \ref{lem:sub-Gaussian2} and Hoeffding's inequality}\\
        &= \sum_{x \in \I^{x_*}}\exp\left(- \frac{\Delta_{(1)}^2}{18 \cdot\|x-x_*\|_{\left(\sum_{t=1}^Tx_{\pi(t)}x_{\pi(t)}^\top\right)^{-1}}^2}\right) \\
        &\leq \sum_{x \in \I^{x_*}}\exp\left(- \frac{\Delta_{(1)}^2}{18 \cdot \max_{(x, x') \in \mc{I}}\|x - x'\|_{\left(\sum_{t=1}^Tx_{\pi(t)}x_{\pi(t)}^\top\right)^{-1}}^2}\right) \\
        &\leq \sum_{x \in \I^{x_*}}\exp\left(- \frac{T\Delta_{(1)}^2}{18 \cdot2\max_{(x, x') \in \mc{I}}\|x - x'\|_{A(\lambda^{*})^{-1}}^2}\right) \tag{Rounding error of $\lambda^*$}\\
        &= \sum_{x \in \I^{x_*}}\exp\left(- \frac{T\Delta_{(1)}^2}{36 \cdot \min_{\lambda \in \triangle_{\X}} \max_{(x, x') \in \mc{I}}\|x - x'\|_{A(\lambda)^{-1}}^2}\right) \\
        &= \left|\I^{x_*}\right| \cdot \exp\left(- \frac{T\Delta_{(1)}^2}{36 \cdot \min_{\lambda \in \triangle_{\X}} \max_{(x, x') \in \mc{I}}\|x - x'\|_{A(\lambda)^{-1}}^2}\right).
    \end{align*}
\end{proof}
\subsection{Technical Lemmas for Theorem \ref{thm:upper_bound1}}

\subsubsection{Proof of Lemma \ref{lem:adjacency_optimal}}
\begin{proof}
    The backward direction is trivial. We now prove the forward direction. Consider $x \in \X$, $\theta \in \R^d$, such that there exists $y \in \X $ with $(y-x)^\top \theta > 0$. By Lemma 3.6 of \cite{Ziegler1995}, $\conv(\X) \subset x + \mathrm{cone}\left(\{z -x: z \in \I^x\}\right)$. Thus $y \in x + \mathrm{cone}\left(\{z -x: z \in \I^x\}\right)$, that is $y - x \in\mathrm{cone}(\{z -x: z \in \I^x\})$. Equivalently $\exists \alpha_z \geq 0$ for each $z \in \I^x$, such that $y-x = \sum_{z \in \I^x} \alpha_z (z-x)$. Suppose for contradiction that for every $z \in \I^x$, $(z-x)^\top\theta \leq  0$. We have: 
    \begin{align*}
        (y-x)^{\top}\theta &= \sum_{z \in \I^x} \alpha_z (z-x)^{\top}\theta \\
        &\leq  0. 
    \end{align*}
    Which is a contradiction. Thus there exists $z \in \I^x$ such that $(z-x)^\top \theta >0$. 
\end{proof}
\subsubsection{Proof of Lemma \ref{lem:sub-Gaussian2}}
\begin{proof}
 Denote $A := \sum_{t=1}^Tx_tx_t^\top$. Expanding $z^\top\left(\widehat{\theta}_T -\otheta_{T}\right)$ we have
    \begin{align*}
    z^\top\left(\widehat{\theta}_T -\otheta_{T}\right) &= z^\top\left(A^{-1} \sum_{t=1}^Tx_{\pi(t)}r_t - \otheta_{T}\right)\\
    &= z^\top\left(A^{-1}\sum_{t=1}^Tx_{\pi(t)}x_{\pi(t)}^\top\theta_t - \otheta_{T} + A^{-1}\sum_{t=1}^Tx_{\pi(t)}\epsilon_t\right) \\
    &= z^\top\left(A^{-1}\sum_{t=1}^Tx_tx_t^\top\theta_{\pi^{-1}(t)} - \otheta_{T} + A^{-1}\sum_{t=1}^Tx_{t}\epsilon_{\pi^{-1}(t)}\right)\\
        &= z^\top\left(A^{-1}\sum_{t=1}^Tx_tx_t^\top\theta_{\pi^{-1}(t)} - A^{-1}\sum_{t=1}^Tx_tx_t^\top\otheta_{T} + A^{-1}\sum_{t=1}^Tx_{t}\epsilon_{\pi^{-1}(t)}\right) \\
        &= z^\top\left(A^{-1}\sum_{t=1}^Tx_tx_t^\top(\theta_{\pi^{-1}(t)} - \otheta_{T}) + A^{-1}\sum_{t=1}^Tx_{t}\epsilon_{\pi^{-1}(t)}\right) \\
        &= z^\top\left(\sum_{t=1}^TA^{-1}x_tx_t^\top(\theta_{\pi^{-1}(t)} - \otheta_{T}) + \sum_{t=1}^TA^{-1}x_{t}\epsilon_{\pi^{-1}(t)}\right) \\
        &= z^\top\sum_{t=1}^TA^{-1}x_tx_t^\top(\theta_{\pi^{-1}(t)} - \otheta_{T}) + z^\top\sum_{t=1}^TA^{-1}x_{t}\epsilon_{\pi^{-1}(t)}.
    \end{align*}
    Denote \begin{equation}S = z^\top\sum_{t=1}^TA^{-1}x_tx_t^\top(\theta_{\pi^{-1}(t)} - \otheta_{T}),\end{equation}
    and \begin{equation}\eta = z^\top\sum_{t=1}^TA^{-1}x_{t}\epsilon_{\pi^{-1}(t)}.\end{equation}  Note that $\pi^{-1} \sim \mathrm{Unif}(\Pi(T))$, thus, by Lemma \ref{lem:sub-Gaussian1} we have $S$ is $\sqrt{8}\|z\|_{A^{-1}}$-sub-Gaussian. Observe that each $\epsilon_{\pi^{-1}(t)}$ is conditionally independent and 1-sub-Gaussian given $\pi^{-1}$. Analyzing the conditional MGF of $\eta$ for some $\lambda$, we have:
\begin{align*}
    \mathbb{E}\left[\exp\left(\lambda z^\top\sum_{t=1}^TA^{-1}x_{t}\epsilon_{\pi^{-1}(t)}\right) \, \Bigg | \, \pi^{-1} \right]
    &=  \prod_{t=1}^T\mathbb{E}\left[\exp\left(\lambda z^\top A^{-1}x_{t}\epsilon_{\pi^{-1}(t)}\right) \mid \pi^{-1} \right]  \tag{Cond. Ind. given  $\pi^{-1}$} \\
    &\leq \prod_{t=1}^T\exp\left(\frac{\lambda^2 z^\top A^{-1}x_{t}x_{t}^\top A^{-1} z}{2}\right)  \tag{Cond. 1-sub-G. given $\pi^{-1}$} \\
    &= \exp\left(\frac{\lambda^2 z^\top A^{-1}\sum_{t=1}^Tx_{t}x_{t}^\top A^{-1} z}{2}\right)\\
    &= \exp\left(\frac{\lambda^2 \|z\|_{A^{-1}}^2}{2}\right).    
\end{align*}
So $\eta$ is conditionally $\|z\|_{A^{-1}}$-sub-Gaussian given $\pi^{-1}$. Analyzing the MGF of $z^\top\left(\widehat{\theta}_T -\otheta_{T}\right)$, for some $\lambda$ we have \begin{align*}
    \E \left[\exp\left(\lambda z^\top\left(\widehat{\theta}_T -\otheta_{T}\right)\right)\right] &= \E \left[\exp\left(\lambda S + \lambda \eta\right)\right] \\
    &= \E \left[\exp\left(\lambda S\right)\exp\left(\lambda\eta\right)\right]\\
    &= \E \left[\exp\left(\lambda S\right)\E \left[\exp\left(\lambda\eta\right) \mid \pi^{-1}\right]\right] \\
    &\leq \E \left[\exp\left(\lambda S\right)\right] \cdot \exp\left(\frac{\lambda^2 \|z\|_{A^{-1}}^2}{2}\right)\\
    &\leq  \exp\left(\frac{\lambda^28\|z\|_{A^{-1}}^2}{2}\right)\cdot \exp\left(\frac{\lambda^2 \|z\|_{A^{-1}}^2}{2}\right) \\
    &= \exp\left(\frac{\lambda^2(1+8)\|z\|_{A^{-1}}^2}{2}\right).
    \end{align*}
    Thus $z^\top\left(\widehat{\theta}_T -\otheta_{T}\right)$is $\sqrt{9}\|z\|_{A^{-1}}$-sub-Gaussian.
\end{proof}

\begin{lemma}\label{lem:sub-Gaussian1}
   Consider fixed sequences $\{x_t\}_{t=1}^T$ spanning $\R^d$ and $\{\theta_t\}_{t=1}^T$, with $\max_t\|x_t\| \leq B$ and $\max_t\|\theta_t\| \leq M$. Denote $\otheta_{T} = \frac{1}{T}\sum_{t=1}^T \theta_t$, $A = \sum_{t=1}^{T} x_{t}x_{t}^\top$, and $\tilde{\theta}_t = \theta_{t} - \otheta_{T}$ for each $t \in [T]$. Let $\pi \sim \mathrm{Unif}(\Pi(T))$. For any $z \in \R^d$ we have that $z^\top\sum_{t=1}^TA^{-1}x_tx_t^\top\tilde{\theta}_{\pi(t)}$ is $\sqrt{8}MB\|z\|_{A^{-1}}$-sub-Gaussian.
\end{lemma}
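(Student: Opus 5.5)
The plan is a Doob-martingale argument over the random permutation, followed by an Azuma--Hoeffding-type bound on the moment generating function. Write $a_t := A^{-1}z$-weighted coefficient vectors, namely $c_t^\top := z^\top A^{-1}x_tx_t^\top \in \R^{1\times d}$, so the quantity of interest is
\begin{equation}
S = \sum_{t=1}^T c_t^\top \tilde\theta_{\pi(t)} .
\end{equation}
Since $\sum_{s}\tilde\theta_s = 0$ and $\pi(t)$ is uniform on $[T]$, we have $\E[S]=0$. Define the filtration $\mc{F}_t=\sigma(\pi(1),\dots,\pi(t))$ and the Doob martingale $Z_t=\E[S\mid\mc{F}_t]$, with $Z_0=0$ and $Z_T=S$.

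\textbf{Step 1: computing the increments.} Conditioned on $\mc{F}_t$, the unassigned indices $R_t=[T]\setminus\{\pi(1),\dots,\pi(t)\}$ are matched uniformly to positions $t+1,\dots,T$. Hence
\begin{equation}
Z_t=\sum_{s\le t}c_s^\top\tilde\theta_{\pi(s)}+\Big(\sum_{s>t}c_s\Big)^\top \mu_t,
\end{equation}
where $\mu_t$ is the mean of $\tilde\theta_j$ over $j\in R_t$. Because $\sum_j\tilde\theta_j=0$, we can write $\mu_t=-\frac{1}{T-t}\sum_{s\le t}\tilde\theta_{\pi(s)}$. Subtracting gives
\begin{equation}
Z_t-Z_{t-1}=\Big(c_t-\frac{1}{T-t+1}\sum_{s\ge t}c_s\Big)^\top\big(\tilde\theta_{\pi(t)}-\mu_{t-1}\big),
\end{equation}
which should be checked by direct algebra. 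Given $\mc{F}_{t-1}$, the increment is a mean-zero function of the uniformly chosen $\pi(t)\in R_{t-1}$. Both $\tilde\theta_{\pi(t)}$ and $\mu_{t-1}$ have norm at most $2M$, since $\|\tilde\theta_j\|\le 2M$, so the increment lies in an interval of length at most $2\cdot 2M\,\|c_t-\bar c_t\|$, where $\bar c_t$ denotes the average of $c_s$ over $s\ge t$.

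\textbf{Step 2: Hoeffding's lemma and summation.} Applying Hoeffding's lemma conditionally and iterating gives
\begin{equation}
\E[e^{\lambda S}]\le\exp\Big(\tfrac{\lambda^2}{8}\sum_t 16M^2\|c_t-\bar c_t\|^2\Big).
\end{equation}
The main obstacle is controlling $\sum_t\|c_t-\bar c_t\|^2$ by a constant times $\sum_t\|c_t\|^2$ rather than by something growing with $T$. Here $c\mapsto(c_t-\bar c_t)_t$ is $I-P$ with $P$ the averaging operator $(Pc)_t=\frac{1}{T-t+1}\sum_{s\ge t}c_s$, a reversed Cesàro-type operator whose $\ell^2$ norm is bounded by $2$ (Hardy's inequality), giving $\|I-P\|\le 3$. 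That constant is too loose for the target $\sqrt8$. So I would first try to exploit orthogonality instead: the vectors $c_t-\bar c_t$ are the martingale-type differences of a backward averaging, and one expects the exact identity
\begin{equation}
\sum_t\frac{T-t}{T-t+1}\|c_t-\bar c_{t+1}\|^2\le\sum_t\|c_t\|^2 .
\end{equation}
Alternatively, I would crudely bound the range using $\|\tilde\theta_{\pi(t)}-\mu_{t-1}\|\le 2M$ and the same computation, tracking constants so that the total is at most $8M^2\sum_t\|c_t\|^2$. If necessary I would relax the constant; any absolute constant suffices for the paper's purposes.

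\textbf{Step 3: the final bound.} It remains to use
\begin{equation}
\sum_t\|c_t\|^2=\sum_t (z^\top A^{-1}x_t)^2\|x_t\|^2\le B^2 z^\top A^{-1}\Big(\sum_t x_tx_t^\top\Big)A^{-1}z=B^2\|z\|^2_{A^{-1}} .
\end{equation}
This yields $\E[e^{\lambda S}]\le\exp(\lambda^2\cdot 8M^2B^2\|z\|_{A^{-1}}^2/2)$, which is the claimed $\sqrt8MB\|z\|_{A^{-1}}$-sub-Gaussianity.
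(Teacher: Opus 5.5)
Your overall route is the paper's: a Doob martingale over $\pi$, conditional Hoeffding on each increment, an orthogonality bound on the squared coefficients, and $\sum_t\|c_t\|^2\le B^2\|z\|_{A^{-1}}^2$. Your interval length $4M\|\cdot\|$ is fine, because $\mu_{t-1}$ is $\mathcal{F}_{t-1}$-measurable and $\|\tilde\theta_j\|\le 2M$.

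However, the increment formula in Step 1 is wrong. Since $\mu_t=\frac{(T-t+1)\mu_{t-1}-\tilde\theta_{\pi(t)}}{T-t}$, for $t<T$ one gets
\[
Z_t-Z_{t-1}=\Big(c_t-\frac{1}{T-t}\sum_{s>t}c_s\Big)^\top\big(\tilde\theta_{\pi(t)}-\mu_{t-1}\big),
\]
and $Z_T=Z_{T-1}$. So the coefficient is $d_t:=c_t-\bar c_{t+1}$, whereas your $c_t-\bar c_t$ equals $\frac{T-t}{T-t+1}\,d_t$. For example, with $T=2$ one has $Z_1=(c_1-c_2)^\top\tilde\theta_{\pi(1)}$, while your formula gives half of this. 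Because your version understates every increment, any constant derived from it is unjustified. Combined with your own identity it would give $4M^2$ where the lemma has $8M^2$. The lost factor $2$ is exactly $\max_{t<T}\frac{T-t+1}{T-t}$, attained at $t=T-1$.

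Second, the step you yourself call the main obstacle, $\sum_{t<T}\|d_t\|^2\le 2\sum_t\|c_t\|^2$, is never established. You say one ``expects'' an identity and otherwise fall back on Hardy's inequality or on relaxing the constant. Neither proves the stated $\sqrt8$, which is the constant that feeds into the $36$ of Theorem~\ref{thm:upper_bound1}.

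The missing argument (the paper's) is short. Write $(d_1,\dots,d_{T-1})=(c_1,\dots,c_T)D$ with $D_{jj}=1$, $D_{ij}=-\frac{1}{T-j}$ for $i>j$, and $D_{ij}=0$ for $i<j$. For $i<j$,
\[
D_i^\top D_j=-\frac{1}{T-i}+(T-j)\cdot\frac{1}{(T-i)(T-j)}=0,
\]
so the columns are orthogonal with $\|D_j\|^2=1+\frac{1}{T-j}\le 2$. Hence $\|D\|_{\mathrm{op}}^2\le 2$ and $\sum_t\|d_t\|^2\le 2\sum_t\|c_t\|^2$.

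Your conjectured inequality $\sum_t\frac{T-t}{T-t+1}\|d_t\|^2\le\sum_t\|c_t\|^2$ is in fact true. It is Bessel's inequality for the orthonormal vectors $D_j/\|D_j\|$, applied to each row of $(c_1,\dots,c_T)$, since $\|D_j\|^{-2}=\frac{T-j}{T-j+1}$; it yields the same factor $2$. With $d_t$ in place of $c_t-\bar c_t$, your Steps 2--3 then close at exactly $\exp\big(\frac{\lambda^2}{2}\cdot 8M^2B^2\|z\|_{A^{-1}}^2\big)$.
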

\begin{proof}
   Fix $z \in \R^d$, and denote $S = z^\top\sum_{t=1}^TA^{-1}x_tx_t^\top\tilde{\theta}_{\pi(t)} $. For $t \in [T]$ define $Z_t = \E[S \mid \mc{F}_t]$, with $\mc{F}_t = \sigma(\pi(1),\dots,\pi(t))$. Further define $Z_0 = \E[S]$. By construction, we have that $\{Z_t\}_{t=0}^T$ is a martingale and $Z_T = S$. Also note that $Z_0 = \E[S] = 0$, since $\sum_{t=1}^{T}\tilde{\theta}_t = 0$. Denote $\Delta_t = Z_t - Z_{t-1}$. We begin by bounding $|\Delta_t|$ for some $t< T$. To ease notation, let us denote \begin{equation}a_t = x_tx_t^\top A^{-1}z,\end{equation} and  \begin{equation}\mu_t = \frac{1}{T-t+1}\sum_{k \in [T] \setminus \{\pi(1),\dots,\pi(t-1)\}} \tilde{\theta}_k.\end{equation} Let $t < T$. We have:
   \begin{align*}
       Z_{t-1} &= \sum_{s=1}^{t - 1} a_s^\top\tilde{\theta}_{\pi(s)}   + \sum_{s = t }^T a_s^\top\left(\frac{1}{T - t+1}\sum_{k \in [T] \setminus \{\pi(1),\dots,\pi(t-1)\}} \tilde{\theta}_k\right) \\
       &= \sum_{s=1}^{t - 1} a_s^\top\tilde{\theta}_{\pi(s)}   + \sum_{s = t }^T a_s^\top\mu_t \\
       &= \sum_{s=1}^{t - 1} a_s^\top\tilde{\theta}_{\pi(s)}   + a_t^\top\mu_t + \sum_{s = t +1}^T a_s^\top\mu_t,
   \end{align*}
    and
    \begin{align*}
        Z_{t} &= \sum_{s=1}^{t} a_s^\top\tilde{\theta}_{\pi(s)}   + \sum_{s = t +1}^T a_s^\top\left(\frac{1}{T - t}\sum_{k \in [T] \setminus \{\pi(1),\dots,\pi(t)\}} \tilde{\theta}_k\right)\\
        &=\sum_{s=1}^{t - 1} a_s^\top\tilde{\theta}_{\pi(s)}  + a_t^\top\tilde{\theta}_{\pi(t)} + \sum_{s = t +1}^T a_s^\top\left(\frac{1}{T - t}\left(\sum_{k \in [T] \setminus \{\pi(1),\dots,\pi(t-1)\}} \tilde{\theta}_k - \tilde{\theta}_{\pi(t)}\right)\right) \\
    &= \sum_{s=1}^{t - 1} a_s^\top\tilde{\theta}_{\pi(s)}  + a_t^\top\tilde{\theta}_{\pi(t)} + \sum_{s = t +1}^T a_s^\top\left(\frac{(T-t+1)\mu_t - \tilde{\theta}_{\pi(t)}}{T-t}\right).\\
    \end{align*}
It follows that:
\begin{align*}
    \Delta_{t} &= Z_t - Z_{t-1} \\
    &= a_t^\top\tilde{\theta}_{\pi(t)} - a_t^\top\mu_t+ \sum_{s = t +1}^T a_s^\top  \left(\frac{(T-t+1)\mu_t - \tilde{\theta}_{\pi(t)}}{T-t}  - \mu_t\right)\\
    &= a_t^\top\left(\tilde{\theta}_{\pi(t)} - \mu_t\right) + \sum_{s = t +1}^T a_s^\top   \left(\frac{(T-t+1)\mu_t - \tilde{\theta}_{\pi(t)}}{T-t}  - \mu_t\right)\\
    &= a_t^\top\left(\tilde{\theta}_{\pi(t)} -\mu_t\right) + \frac{1}{T-t}\sum_{s = t +1}^T a_s^\top  \left((T-t+1)\mu_t - \tilde{\theta}_{\pi(t)} - (T-t)\mu_t\right)\\
    &= a_t^\top\left(\tilde{\theta}_{\pi(t)} - \mu_t\right) + \frac{1}{T-t}\sum_{s = t +1}^T a_s^\top \left(\mu_t -\tilde{\theta}_{\pi(t)}\right) \\
    &= \left(a_t - \frac{1}{T-t}\sum_{s = t +1}^T a_s\right)^\top\left(\tilde{\theta}_{\pi(t)} - \mu_t\right).
\end{align*}
We have \begin{align*}
    |\Delta_{t}| &= \left|\left(a_t - \frac{1}{T-t}\sum_{s = t +1}^T a_s\right)^\top\left(\tilde{\theta}_{\pi(t)} - \mu_t\right)\right| \\
    &\leq \left\|a_t - \frac{1}{T-t}\sum_{s = t +1}^T a_s\right\|_2 \cdot \left\|\tilde{\theta}_{\pi(t)} - \mu_t\right\|_2 \\
    &= \left\|a_t - \frac{1}{T-t}\sum_{s = t +1}^T a_s\right\|_2 \cdot \left\|\tilde{\theta}_{\pi(t)} - \frac{1}{T-t+1}\sum_{k \in [T] \setminus \{\pi(1),\dots,\pi(t-1)\}} \tilde{\theta}_k\right\|_2\\
    &= \left\|a_t - \frac{1}{T-t}\sum_{s = t +1}^T a_s\right\|_2 \cdot \left\|\theta_{\pi(t)} -\otheta_{T} - \frac{1}{T-t+1}\sum_{k \in [T] \setminus \{\pi(1),\dots,\pi(t-1)\}} \theta_k +\otheta_{T}\right\|_2\\
    &= \left\|a_t - \frac{1}{T-t}\sum_{s = t +1}^T a_s\right\|_2 \cdot \left\|\theta_{\pi(t)}  - \frac{1}{T-t+1}\sum_{k \in [T] \setminus \{\pi(1),\dots,\pi(t-1)\}} \theta_k \right\|_2\\
    &\leq \left\|a_t - \frac{1}{T-t}\sum_{s = t +1}^T a_s\right\|_2 \cdot \left(\left\|\theta_{\pi(t)}\right\|_2  + \frac{1}{T-t+1}\sum_{k \in [T] \setminus \{\pi(1),\dots,\pi(t-1)\}} \left\|\theta_k \right\|_2\right)\\
    &\leq \left\|a_t - \frac{1}{T-t}\sum_{s = t +1}^T a_s\right\|_2 \cdot \left(M  + \frac{1}{T-t+1}\sum_{k \in [T] \setminus \{\pi(1),\dots,\pi(t-1)\}} M\right) \\
    &= \left\|a_t - \frac{1}{T-t}\sum_{s = t +1}^T a_s\right\|_2 \cdot 2M \\
    &:= c_t.
\end{align*}
Bounding $|\Delta_T|$ we have \begin{align*}
    |\Delta_T| &= |Z_T - Z_{T-1}| \\
    &= \left|a_{T}^{\top}\tilde{\theta}_{\pi(T)} - a_{T}^{\top}\tilde{\theta}_{\pi(T)}\right| \\
    &= 0 \\
    &:= c_T.
\end{align*}
Note that by construction, $\E[\Delta_t \mid \mc{F}_{t-1}] = 0$ for any $t$. For some $\lambda$, we have:
    \begin{align*}
        \E\left[\exp\left(\lambda S\right)\right] &= \E\left[\exp\left(\lambda (Z_T - Z_0)\right)\right] \\
        &= \E\left[\exp\left(\lambda \sum_{t=1}^T\Delta_t\right)\right] \\
        &= \E\left[\exp\left(\lambda \sum_{t=1}^{T-1}\Delta_t\right)\cdot\exp\left(\lambda \Delta_T\right)\right] \\
        &= \E\left[\exp\left(\lambda \sum_{t=1}^{T-1}\Delta_t\right)\cdot\E\left[\exp\left(\lambda \Delta_T\right) \mid \mc{F}_{T-1}\right]\right] && (\text{Tower rule}) \\
        &\leq \E\left[\exp\left(\lambda \sum_{t=1}^{T-1}\Delta_t\right)\right] \cdot\exp\left(\frac{\lambda^2}{2} \cdot  c_T^2\right) && (\text{Hoeffding's lemma}) \\
        &\leq \cdots \\
        &\leq \exp\left(\frac{\lambda^2}{2} \cdot  \sum_{t=1}^T c_{t}^2\right). && 
        (\text{Induction})
    \end{align*}
It now only remains to bound: \begin{equation}\sum_{t=1}^{T} c_t^2 = 4M^2 \cdot \sum_{t=1}^{T-1}\left\|a_t - \frac{1}{T-t}\sum_{s = t +1}^T a_s\right\|_2^2.\end{equation}For each $t \in [T-1]$ denote $d_t = a_t - \frac{1}{T-t}\sum_{s = t +1}^T a_s$, and construct the matrices
\begin{equation}\mathbf{a} : = \begin{bmatrix}
    a_1 & \cdots &a_{T}
\end{bmatrix} \in \R^{d \times T}\end{equation}
\begin{equation}\mathbf{d} : = \begin{bmatrix}
    d_1 & \cdots & d_{T-1}
\end{bmatrix} \in \R^{d \times  T-1}.\end{equation}Further consider the matrix $D \in \R^{T \times T-1}$ \begin{equation}D_{i,j} := \begin{cases}
    1 & i = j \\
    -\frac{1}{T - j} & i > j \\
    0 &i < j
\end{cases}.\end{equation}
It follows by construction that $\mathbf{a} D = \mathbf{d}$. Let $D_i$ denote the $i$th column of $D$. Note that for any $i \in [T -1]$ we have
\begin{align*}
    \|D_i\|_2^2 &= 1^2 + (T- i)\left(\frac{1}{T-i}\right)^2 \\
    &= 1 + \frac{1}{T-i} \\
    &\leq 2.
\end{align*}
Consider $D_i$, $D_j$, for $i < j$. Note that \begin{align*}
    D_i^\top D_j &= \left( \frac{-1}{T-i}\right) \cdot 1 + \sum_{k =j +1}^{T}\left( \frac{-1}{T-i}\right)\left( \frac{-1}{T-j}\right) \\
    &= \left( \frac{-1}{T-i}\right) + (T-j)\left( \frac{-1}{T-i}\right)\left( \frac{-1}{T-j}\right) \\
    &= 0.
\end{align*}
Thus, each column of $D$ is orthogonal. This means \begin{equation}D^\top D = \text{diag}\left(\|D_1\|_2^2 ,\dots ,\|D_{T-1}\|_2^2\right)\end{equation}
and so,
\begin{align*}
    \|D\|_{\text{op}}^2 &= \lambda_{\max}\left(D^\top D\right) \\
    &= \max_{i} \|D_i\|_2^2 \\
    &\leq 2.
\end{align*}
It follows that
\begin{align*}
    \sum_{t=1}^{T} c_t^2 &=4M^2 \cdot \sum_{t=1}^{T-1}\left\|a_t - \frac{1}{T-t}\sum_{s = t +1}^T a_s\right\|_2^2 \\
    & =4M^2 \cdot \| \mathbf{d}\|_F^2  \\
    &= 4M^2 \cdot\| \mathbf{a}D\|_F^2 \\ 
    &\leq 4M^2 \cdot\| \mathbf{a}\|_F^2\|D\|_{\text{op}}^2 \\
    &\leq 2 \cdot4M^2 \cdot   \| \mathbf{a}\|_F^2\\
    &= 8M^2 \sum_{t=1}^{T}\left\|a_t\right\|_2^2 \\
    &= 8M^2 \sum_{t=1}^{T} z^\top A^{-1}x_tx_t^\top x_tx_t^\top A^{-1}z \\
    &\leq 8M^2B^2 \sum_{t=1}^{T} z^\top A^{-1}x_tx_t^\top A^{-1}z \\
    &= 8M^2B^2  z^\top A^{-1}\sum_{t=1}^{T}x_tx_t^\top  A^{-1}z \\
    &= 8M^2B^2 \|z\|_{A^{-1}}^2.
\end{align*}
It follows that, for all $\lambda$ we have: 
\begin{align*}
        \E\left[\exp\left(\lambda S\right)\right] &\leq \exp\left(\frac{\lambda^2}{2} \cdot  \sum_{t=1}^T c_{t}^2\right) \\
        &\leq \exp\left(\frac{\lambda^2}{2} \cdot  8M^2B^2\|z\|_{A^{-1}}^2\right).
    \end{align*}
Thus $S$ is $\sqrt{8}MB\|z\|_{A^{-1}}$-sub-Gaussian.
\end{proof}

\section{Computing The Adjacent Set $\mc{I}$}\label{appendixadjacent}

Consider $\mathcal{X} \subset \R^d$ with $|\X| = K$. If one has access to the convex hull of $\mathcal{X}$, the adjacent set $\mathcal{I}$ can be extracted in $O(|\mc{I}|) =O(K^2)$ time. If one does not have access to the convex hull, one can compute it in $O\left(K \log K +K^{\lfloor d/2 \rfloor}\right)$ time \citep{10.1007/BF02573985}, which is efficient for small $d$. When $d$ is large, one can use the following procedure to compute $\mathcal{I}$ in time polynomial in $K$ and $d$. First center $\X$, that is, denoting $\bar{x} = \frac{1}{K}\sum_{x \in \X} x$, compute \begin{equation}
    \X' \gets \left\{x - \bar{x}:x \in \X\right\}.
\end{equation}
This shift ensures $0 \in \mathrm{int}(\conv(\X'))$, which will be important later on. For each $x \in \mathcal{X}'$ solve:
$$\begin{array}{rl}
        \max_{\epsilon,w} & \epsilon \\
        \text{subject to} & x^\top w = 1\\
        & y^\top w\leq  1 - \epsilon\quad\forall~ y \in \mathcal{X}' \setminus \{x\}
    \end{array}.\quad (\text{LP}1)$$
Then, check if the resulting $\epsilon > 0$. If so, add $x$ to the set $\mathcal{V}_{\X'}$. It follows that $\mathcal{V}_{\X'}$ is exactly the set of all extreme points of $\mathcal{X}'$. Now, for each linearly independent pair $x,x' \in \mathcal{V}_{\X'}$ solve:
$$\begin{array}{rl}
        \max_{\epsilon,w} & \epsilon \\
        \text{subject to} & x^\top w = 1\\
        & x'^\top w = 1\\
        & y^\top w\leq  1 - \epsilon\quad\forall~ y \in \mathcal{V}_{\X'} \setminus \{x,x'\}
    \end{array}. \quad (\text{LP}2)$$
Then, check if the resulting $\epsilon > 0$. If so, add $(x,x')$ to the set $\mathcal{I}_{\X'}$. It follows that $\mathcal{I}_{\X'}$ is exactly the set of all adjacent pairs of $\mathcal{X}'$. Since the structure of a polytope is preserved under translation, we have \begin{equation}
    \V_{\X} = \left\{x + \bar{x}:x \in \mathcal{V}_{\X'}\right\},
\end{equation} and \begin{equation}\I_{\X} = \left\{(x + \bar{x}, x' + \bar{x}):(x,x') \in \mathcal{I}_{\X'}\right\}. 
\end{equation} The correctness of LP1 is the following. Consider some $x \in \X'$. Since $0 \in \mathrm{int}(\conv(\X'))$, any hyperplane strictly supporting $x$ must be of the form $\left\{y \in \R^d : y^\top h = c\right\}$, for $c > 0$ and normal vector $h$. Thus $\left\{y \in \R^d : y^\top \frac{h}{c} = 1\right\}$ also strictly supports $x$. Hence, to find whether or not $x$ has a strict supporting hyperplane, it suffices to restrict our attention to normal vectors in the set $W(x) = \left\{w:x^\top w =1\right\}$. If $\epsilon > 0$, $w$ corresponds exactly to a strict supporting hyperplane of $x$, and the definition of extreme point is satisfied. If $\epsilon \leq 0$, then no $w \in W(x)$ is a strict supporting hyperplane of $x$, and the definition cannot be satisfied. The correctness of LP2 follows analogously, with one extra caveat: checking only linearly independent pairs is sufficient; if $x,x'$ are linearly dependent then for any $w$ we have $x^\top w = x'^\top w\Rightarrow x^\top w =0$, hence no $w$ strictly supports both $x$ and $x'$ since $0 \in \mathrm{int}(\conv(\X'))$. Finally, both LP1 and LP2 have $d+1$ variables and at most $K$ constraints, so each can be solved in $\text{poly}(K,d)$ time \citep{10.1145/800057.808695}. Thus, the extreme point procedure takes in total $K \cdot \text{poly}(K,d)$, and the adjacent procedure takes in total $K^2 \cdot \text{poly}(K,d)$. Thus, the total run time remains $\text{poly}(K,d)$.

\section{Proof of Proposition \ref{prop:statequiv}}\label{appendixadditional}
\begin{proof}
    Note that since $\mc{I}^{x_*} \subseteq \X \setminus \{x_*\}$ we have $$\min_{\lambda \in \triangle_{\X}} \max_{x \in \X \setminus \{x_*\}} \frac{\|x_* -x\|^2_{A(\lambda)^{-1}}}{((x_* - x)^\top\theta)^2} \geq \min_{\lambda \in \triangle_{\X}} \max_{x \in \mc{I}^{x_*}} \frac{\|x_* -x\|^2_{A(\lambda)^{-1}}}{((x_* - x)^\top\theta)^2}. $$
    By Lemma \ref{lem:equivalence} we have 
    $$\min_{\lambda \in \triangle_{\X}} \max_{x \in \X \setminus \{x_*\}} \frac{\|x_* -x\|^2_{A(\lambda)^{-1}}}{((x_* - x)^\top\theta)^2} \leq \min_{\lambda \in \triangle_{\X}} \max_{x \in \mc{I}^{x_*}} \frac{\|x_* -x\|^2_{A(\lambda)^{-1}}}{((x_* - x)^\top\theta)^2}. $$
\end{proof}
\begin{lemma}\label{lem:equivalence}
    Consider $\theta \in \mathbb{R}^d$ and $M \succ 0$, and let $x_* = \argmax_{x \in \X}x^\top\theta$. Consider $x \in \X \setminus \{x_*\}$. We have $$\frac{\|x_* - x\|_{M}^{2}}{((x_* - x)^{\top}\theta)^{2}}  \leq\max_{x' \in \mc{I}^{x_*}}\frac{\|x_* - x'\|_{M}^{2}}{((x_* - x')^{\top}\theta)^{2}}.$$
\end{lemma}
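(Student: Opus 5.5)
We will use the cone containment \eqref{eq:cone} (Lemma 3.6 of \citet{Ziegler1995}), exactly as in the proof of Lemma \ref{lem:adjacency_optimal}, followed by a convexity (Jensen-type) inequality. Since $x_*$ maximizes $y^\top\theta$ over $\X$, it is a vertex of $\conv(\X)$. (If it were not, it would be a nontrivial convex combination of vertices, each with value at most $x_*^\top\theta$; all would then tie, contradicting the uniqueness assumption.) Applying \eqref{eq:cone} at $x_*$ gives nonnegative coefficients $\alpha_z\geq 0$, $z\in\I^{x_*}$, with
\begin{equation*}
x_*-x=\sum_{z\in\I^{x_*}}\alpha_z\,(x_*-z).
\end{equation*}
Write $g_z:=(x_*-z)^\top\theta$. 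Uniqueness of the best arm gives $g_z>0$ for every adjacent $z$, and also $g:=(x_*-x)^\top\theta=\sum_z\alpha_z g_z>0$.

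Next we normalize. Set $\beta_z:=\alpha_z g_z/g\geq 0$, so that $\sum_z\beta_z=1$. Then
\begin{equation*}
\frac{x_*-x}{g}=\sum_{z}\beta_z\,\frac{x_*-z}{g_z},
\end{equation*}
which expresses $(x_*-x)/g$ as a convex combination of the vectors $(x_*-z)/g_z$, $z\in\I^{x_*}$.

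The map $u\mapsto\|u\|_M^2$ is convex because $M\succ 0$. By Jensen's inequality,
\begin{equation*}
\frac{\|x_*-x\|_M^2}{g^2}=\Big\|\sum_z\beta_z\tfrac{x_*-z}{g_z}\Big\|_M^2\leq\sum_z\beta_z\frac{\|x_*-z\|_M^2}{g_z^2}\leq\max_{z\in\I^{x_*}}\frac{\|x_*-z\|_M^2}{g_z^2},
\end{equation*}
which is the claim.

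The main point requiring care is the normalization. Jensen's inequality must be applied to the rescaled vectors $(x_*-z)/g_z$, not to the raw differences $x_*-z$. Everything also hinges on the strict positivity of the gaps $g_z$, which relies on the uniqueness of $x_*$. If ties were allowed, some ratios would be infinite and the inequality would hold trivially, so we would handle that degenerate case separately.
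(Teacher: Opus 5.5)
Your proposal is correct and follows the paper's proof essentially step for step. It uses the same ingredients: the cone containment \eqref{eq:cone} at $x_*$, the rewriting of $(x_*-x)/((x_*-x)^\top\theta)$ as a convex combination of the normalized vectors $(x_*-z)/((x_*-z)^\top\theta)$ with weights proportional to $\alpha_z (x_*-z)^\top\theta$, and then Jensen's inequality followed by bounding by the maximum over $z\in\mathcal{I}^{x_*}$. Your extra checks, that $x_*$ is a vertex and that all gaps are strictly positive by uniqueness of the best arm, are left implicit in the paper and are sound.
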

\begin{proof}
     By Lemma 3.6 of \cite{Ziegler1995} we have $\X \subset x_* + \text{cone}\left(\{y -x_*: y \in \mc{I}^{x_*}\}\right)$. Thus $\exists~\alpha_y \geq 0$ for each $y \in \mc{I}^{x_*}$, such that $x_*-x = \sum_{y \in \mc{I}^{x_*}} \alpha_y (x_*-y)$. Thus we have 
    \begin{align*}
        \left|\left|\frac{x_* - x}{(x_* - x)^{\top}\theta}\right|\right|_{M}^2 &= \left|\left|\frac{\sum_{y \in \mc{I}^{x_*}} \alpha_y (x_*-y)}{\sum_{x \in \mc{I}^{x_*}} \alpha_x (x_*-x)^{\top}\theta}\right|\right|_{M}^2 \\
        &= \left|\left|\sum_{y \in \mc{I}^{x_*}} \frac{\alpha_y (x_*-y)}{\sum_{x \in \mc{I}^{x_*}} \alpha_x (x_*-x)^{\top}\theta}\right|\right|_{M}^2 \\
        &= \left|\left|\sum_{y \in \mc{I}^{x_*}} \frac{\alpha_y (x_*-y)^\top\theta}{\sum_{x \in \mc{I}^{x_*}} \alpha_x (x_*-x)^{\top}\theta} \cdot \frac{ (x_*-y)}{(x_*-y)^\top\theta}\right|\right|_{M}^2 \\
        &\leq \sum_{y \in \mc{I}^{x_*}}\frac{\alpha_y (x_*-y)^\top\theta}{\sum_{x \in \mc{I}^{x_*}} \alpha_x (x_*-x)^{\top}\theta} \cdot \left|\left|  \frac{ (x_*-y)}{(x_*-y)^\top\theta}\right|\right|_{M}^2 && \text{(Jensen's inequality)} \\
        &\leq \sum_{y \in \mc{I}^{x_*}}\frac{\alpha_y (x_*-y)^\top\theta}{\sum_{x \in \mc{I}^{x_*}} \alpha_x (x_*-x)^{\top}\theta}\cdot\max_{x' \in \mc{I}^{x_*}}\left|\left|  \frac{ (x_*-x')}{(x_*-x')^\top\theta}\right|\right|_{M}^2 \\
        &= \max_{x' \in \mc{I}^{x_*}}\left|\left|  \frac{ (x_*-x')}{(x_*-x')^\top\theta}\right|\right|_{M}^2 \cdot \sum_{y \in \mc{I}^{x_*}}\frac{\alpha_y (x_*-y)^\top\theta}{\sum_{x \in \mc{I}^{x_*}} \alpha_x (x_*-x)^{\top}\theta}\\
        &= \max_{x' \in \mc{I}^{x_*}}\left|\left|  \frac{ (x_*-x')}{(x_*-x')^\top\theta}\right|\right|_{M}^2.
    \end{align*}
\end{proof}

\end{document}